\PassOptionsToPackage{capitalize}{cleveref}
\PassOptionsToPackage{headheight=28pt}{geometry}
\documentclass{ailab}
\usepackage[utf8]{inputenc}
\usepackage[T1]{fontenc}
\usepackage{amsthm}
\newtheorem{definition}{Definition}
\newtheorem{proposition}{Proposition}
\usepackage{booktabs}
\usepackage{array}
\usepackage{tabularx}
\usepackage{colortbl}
\newcolumntype{Y}{>{\raggedright\arraybackslash}X}
\usepackage{amsmath,amssymb,amsfonts}
\usepackage{microtype}
\usepackage{xcolor}
\usepackage{graphicx}
\usepackage{algorithm}
\usepackage{algpseudocode}
\algrenewcommand{\algorithmicrequire}{\textnormal{\bfseries Require:}}
\algrenewcommand{\algorithmicensure}{\textnormal{\bfseries Ensure:}}
\algrenewcommand{\algorithmiccomment}[1]{\hfill{\footnotesize\color{gray}\itshape\# #1}}
\usepackage{float}
\usepackage{enumitem}
\usepackage{subcaption}
\usepackage[most]{tcolorbox}
\usepackage{wrapfig}
\usepackage{listings}
\usepackage{titletoc} %
\usepackage{hyperref}
\hypersetup{
    colorlinks=true,
    linkcolor=metabg,
    citecolor=metabg,
    urlcolor=metabg,
    filecolor=metabg
}
\usepackage{url}
\usepackage{needspace}

\titlecontents{appsection}
  [15pt]
  {\addvspace{8pt}\normalfont\bfseries}
  {\contentslabel[\thecontentslabel\hfill]{15pt}}
  {\hspace*{-15pt}}
  {\hfill\contentspage}

\titlecontents{appsubsection}
  [39pt]
  {\addvspace{2pt}\normalfont}
  {\contentslabel[\thecontentslabel\hfill]{24pt}}
  {\hspace*{-24pt}}
  {\titlerule*[8pt]{.}\contentspage}

\definecolor{lightblue}{RGB}{173,216,230}
\definecolor{papernavy}{HTML}{29465B}
\definecolor{paperline}{HTML}{D5DEE6}
\definecolor{paperbg}{HTML}{F6F8FA}
\lstdefinestyle{promptlisting}{
    basicstyle=\fontsize{9}{12}\selectfont\ttfamily,
    aboveskip=2pt,
    belowskip=2pt,
    breaklines=true,
    breakatwhitespace=true,
    columns=fullflexible,
    keepspaces=true,
    showstringspaces=false
}
\tcbset{papercard/.style={
    enhanced,
    colback=white, colframe=paperline, boxrule=0pt,
    frame hidden,
    borderline west={1.2pt}{0pt}{papernavy},
    borderline south={0.35pt}{0pt}{paperline},
    colbacktitle=paperbg, coltitle=papernavy,
    fonttitle=\sffamily\bfseries\fontsize{9.5}{12}\selectfont,
    titlerule=0pt, toptitle=6pt, bottomtitle=6pt,
    arc=0pt, outer arc=0pt,
    left=14pt, right=14pt, top=10pt, bottom=10pt,
    before skip=10pt, after skip=12pt
}}
\newtcolorbox{synthprompt}[1][]{
    papercard,
    breakable,
    colback=paperbg, colbacktitle=papernavy!12!white, coltitle=papernavy,
    toptitle=5pt, bottomtitle=5pt,
    title after break={Prompt template (continued)},
    #1
}
\newtcolorbox{wrapprompt}[1][]{
    papercard,
    breakable,
    colback=paperbg, colbacktitle=papernavy!12!white, coltitle=papernavy,
    toptitle=5pt, bottomtitle=5pt,
    #1
}
\newtcolorbox{walkcard}[2][]{
    papercard,
    breakable,
    title={#2},
    title after break={#2\space (continued)},
    fonttitle=\sffamily\bfseries\fontsize{9}{11}\selectfont,
    fontupper=\fontsize{9}{12}\selectfont,
    toptitle=2pt, bottomtitle=2pt,
    top=3pt, bottom=3pt, before skip=3pt, after skip=4pt,
    before upper={\setlength{\parskip}{2pt}%
      \setlength{\abovedisplayskip}{3pt}\setlength{\belowdisplayskip}{3pt}},
    #1
}
\newcommand{\walkstep}[2]{%
    \par\Needspace{5\baselineskip}\addvspace{6pt}\noindent
    \tcbox[on line,colback=papernavy,colframe=papernavy,boxrule=0pt,
      arc=1pt,boxsep=0pt,left=5pt,right=5pt,top=3pt,bottom=3pt]{%
      \color{white}\sffamily\bfseries\fontsize{8}{10}\selectfont STEP #1}%
    \hspace{8pt}{\sffamily\bfseries\fontsize{9}{11}\selectfont\color{papernavy}#2}\par\nobreak\vspace{2pt}}
\newcommand{\walkdivider}{%
    \par\addvspace{2pt}{\color{paperline}\hrule height 0.35pt}%
    \par\addvspace{3pt}}
\newcommand{\walkrecord}[2]{\item[$j=#1$]\parbox[t]{\linewidth}{#2}}
\theoremstyle{remark}
\newtheorem{remark}{Remark}
\title{More Memory, Worse Agents:
Error Reproduction and Anti-Persistence in LLM Agents}
\author[1]{Ruhan Wang}
\author[2]{Kishan Panaganti}
\author[1]{Dongruo Zhou}
\affiliation[1]{Indiana University}
\affiliation[2]{Tencent HY}
\email{ruhwang@iu.edu}
\email{kpb@global.tencent.com}
\email{dz13@iu.edu}
\date{October 19, 2026}
\hypersetup{
    pdftitle={More Memory, Worse Agents: Error Reproduction and Anti-Persistence in LLM Agents},
    pdfauthor={Ruhan Wang, Kishan Panaganti, Dongruo Zhou}
}

\abstract{
LLM agents can improve across tasks without updating model parameters by storing experience from their own executions in external memory and reusing it to guide later decisions. A central design question is whether to retain only observed trajectories and outcomes or also the rules, summaries, and skills inferred from them. Retaining these abstractions allows agents to reuse strategies without repeatedly deriving them from raw trajectories. However, errors in stored abstractions can propagate to subsequent decisions through retrieval and reuse. They can also shape execution trajectories, which then become the basis for new abstractions that reproduce the original errors. We term this feedback mechanism \emph{error reproduction}: errors are regenerated through a cycle of abstraction, execution, and re-abstraction, extending their persistence beyond direct reuse of the original faulty knowledge. We propose \textsc{Anti-Persistence}, a memory design that accumulates trajectories and outcomes while reconstructing task-specific guidance on demand. An adaptive selector uses prior outcomes to decide whether and how to synthesize guidance, which is discarded after execution. This prevents its direct reuse as a stored object, although indirect feedback through retained trajectories remains possible. Across different benchmarks and backbones, \textsc{Anti-Persistence} achieves higher overall held-out success and the lowest average LLM-call cost among the evaluated memory designs. These results support separating experience accumulation from abstraction persistence in memory-based self-improvement.

}

\begin{document}
\thispagestyle{firstheader}
\maketitle

\section{Introduction}
\label{sec:intro}

Large language model (LLM) agents solve tasks through multi-step interactions with an environment, producing trajectories of actions and observations~\citep{yao2022react}. We study \emph{memory-based self-improvement}, in which an agent stores experience and outcomes from its own executions in external memory, retrieves that memory to guide later tasks, and updates it with the resulting experience~\citep{shinn2023reflexion,zhao2024expel,song2024trial,wang2023voyager}. Together, repeated execution, memory updates, and reuse form a feedback loop across tasks~\citep{xiong2025memory,du2026memory}: with the LLM parameters held fixed, the agent adapts through changes in the experience available for decision-making and in how that experience is used. Although this process is intended to improve subsequent performance, its effectiveness must be established empirically, because the content retained by the agent can affect both the next task and later rounds of adaptation.

Stored experience can take the form of observed interaction records, which preserve trajectories and task outcomes, or inferred abstractions, which express rules, skills, summaries, or heuristics derived from those records. Because long interaction histories contain both reusable procedures and episode-specific details, distilling them into compact guidance can make relevant experience easier to retrieve and use. Many agent systems therefore extract abstractions from past executions and retain them for subsequent tasks. This compression nevertheless requires judgments about which details matter and under what conditions a pattern generalizes; a rule that succeeds in one task may depend on conditions absent from another. The value of abstraction thus depends not only on identifying a useful pattern but also on reassessing its applicability when it is reused.

The effects of a mistaken abstraction can extend beyond the task in which it is retrieved and used. A stored rule can influence an agent's actions, and the resulting execution records may become inputs for synthesizing new abstractions in subsequent self-improvement rounds (\Cref{fig:overview}). Even when the original rule is not supplied to the synthesis step, these records may still contain erroneous actions taken under its guidance. When extracting lessons from these records, the agent may infer a new rule that reproduces the same error, allowing the error to persist indirectly through execution history. We call this possible feedback mechanism \emph{error reproduction}, distinguishing the reappearance of an error in a newly generated abstraction from its propagation through retrieval and reuse. Both trajectories and abstractions can carry influence across tasks, with the extent of that influence depending on how the stored content is retrieved and used.

Memory maintenance can improve stored content through scoring, pruning, or refinement~\citep{qiu2026autorefine}, but deleting an abstraction does not by itself revise trajectories generated under its influence or remove abstractions subsequently derived from those trajectories. Whether reproduction continues depends jointly on which records are retrieved, what is synthesized from them, and which outputs are admitted to memory. These dependencies motivate examining the need for inferred content to persist alongside the procedures used to maintain it, leading to the following question:

{\centering
How can LLM agents benefit from abstraction in memory-based self-improvement while limiting error reproduction?
\par}

We propose \textsc{Anti-Persistence}, a memory module that retains observed interaction records while synthesizing abstractions for the current task. An adaptive selector uses prior outcomes to determine whether and how to transform retrieved experience into guidance, which supports the current execution and is discarded afterward. Subsequent tasks can revisit the accumulated records to construct guidance suited to their own context, allowing the agent to adapt through experience accumulation and mode selection while the synthesis procedure remains fixed and its outputs remain temporary. Our contributions are threefold.
\begin{itemize}[leftmargin=*, nosep]

\item We introduce a memory design for self-improvement that separates the accumulation of execution experience from the temporary use of abstractions. Raw trajectories and task outcomes are retained, while inferred rules, summaries, and heuristics are used only for the current task and discarded afterward.

\item We develop an adaptive mode selector for on-demand abstraction that chooses between using retrieved records directly and synthesizing behavioral patterns, task strategies, or corrective guidance for each task. This enables a fixed synthesis procedure to support successive tasks without persisting its outputs.

\item We evaluate multiple rounds of self-improvement across three benchmarks and three backbones. \textsc{Anti-Persistence} achieves a higher overall success rate on held-out tasks at a lower average LLM-call cost. We also find associations between earlier and subsequently generated abstractions, providing empirical evidence consistent with error reproduction.

\end{itemize}

\begin{figure}[t]
\centering
\includegraphics[width=\linewidth]{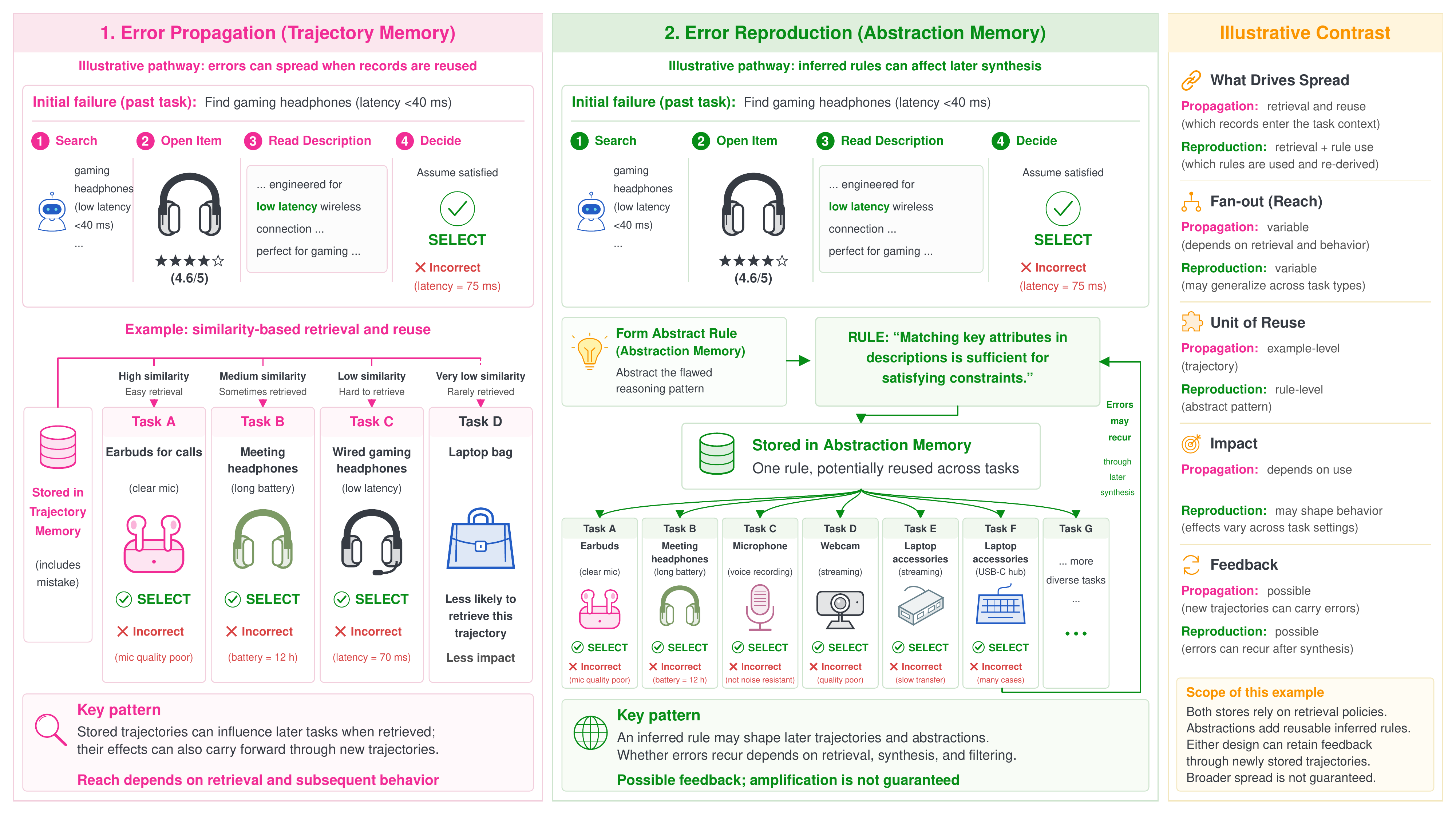}
\vspace{-0.2in}
\caption{
 \small Error propagation and reproduction in agent memory. 
\textbf{Left:} An illustrative error in a stored trajectory is reused through retrieval. 
\textbf{Center:} A faulty inferred rule may shape later trajectories and be encoded again through subsequent synthesis. 
\textbf{Right:} Possible routes of influence. Both designs can retain feedback through stored trajectories; the examples imply neither a fixed ordering of their reach nor inevitable amplification.
}
\label{fig:overview}
\vspace{10pt}
\end{figure}

\section{Related Work}
\textbf{Memory-based self-improvement in LLM agents.}
External memory allows agents to incorporate their own interaction experience into later decisions~\citep{yao2022react,song2024trial}, with existing approaches differing in the content retained across updates. Trajectory-based methods store raw or lightly processed episodes for similarity-based retrieval~\citep{xiong2025memory,du2026memory}, whereas other methods retain verbal reflections, cross-task rules, summaries, or executable skills~\citep{shinn2023reflexion,zhao2024expel,wang2023voyager,chen2024automanual,yang2026autoskill,zhang2026memskill}. Related forms of abstraction also appear in production agent systems~\citep{zhang2026equipping,yang2024swe,wang2024openhands}. We examine how the choice of persistent content affects later memory updates through the trajectories it helps generate.

\textbf{Error propagation in iterative systems.}
Prior work examines how generated content can bias later behavior through repeated use~\citep{xiong2025memory,wyllie2024fairness,briesch2023large}, while related model-collapse studies show that repeated training on self-generated data can reduce distributional fidelity~\citep{shumailov2024ai,seddik2024bad,gerstgrasser2024model,feng2024beyond}. Although parameter training differs from the external-memory updates studied here, these findings motivate examining feedback from generated experience. We focus on \emph{error reproduction}, in which abstractions shape trajectories that inform subsequent synthesis: propagation concerns how an existing error affects behavior, whereas reproduction concerns whether that influence is encoded in a new abstraction. Because deleting the original abstraction does not by itself remove stored descendants, this dependence on earlier outputs creates a possible feedback loop across iterations~\citep{perdomo2020performative,pagan2023classification}, even though maintenance can also target the descendant entries.

\textbf{Memory maintenance and reliability.}
Related approaches improve memory or agent reliability through pruning, scoring, refinement, and learned operations~\citep{qiu2026autorefine,packer2023memgpt,park2023generative,madaan2023self,schick2023toolformer,zhong2024memorybank}. In systems that retain abstractions, these maintenance procedures determine which inferred content remains available for direct retrieval across tasks. \textsc{Anti-Persistence} addresses the complementary question of persistence by synthesizing task-specific abstractions on demand and discarding them after use, while retaining the resulting trajectories for later retrieval and synthesis. This design removes direct abstraction reuse but still permits indirect feedback through experience.

\section{Anti-Persistence: A Memory Design for Self-Improving Agents}
\label{sec:method}
\textsc{Anti-Persistence} constructs temporary guidance from accumulated interaction records, combining on-demand synthesis with adaptive mode selection. We first formalize the memory update process and then describe how these components support task execution.
\subsection{Preliminaries}
\label{subsec:problem-setup}

We consider an agent with fixed backbone parameters $\theta$ that processes a sequence of tasks, receiving task $\tau_k$ and producing trajectory $e_k$ and outcome $y_k$ at training-task update $k$. Each experimental iteration $\ell$ comprises multiple such updates followed by held-out evaluation with memory and related state frozen.

\Needspace{6\baselineskip}
\textbf{Memory.}
The agent maintains a trajectory memory
$
M_k=\{(\tau_j,e_j,y_j):j<k\},
$
which stores previous executions and their outcomes. An operator $\phi$ generates an abstraction from the available records:
$
s=\phi(H),\quad H\subseteq M,
$
where $M$ is the trajectory store available at synthesis time, with task, mode, and decoding arguments omitted for clarity. The abstraction store $S_k$ contains the summaries, rules, strategies, or skills currently retained; because maintenance may delete or transform entries, it need not include every abstraction generated previously.

\textbf{Retrieval and generation.}
Given a new task $\tau_k$, the agent retrieves a context
$
C_k=R(\tau_k,M_k\cup S_k,B),
$
where $R$ is a retrieval operator and $B$ is the retrieval budget. The agent then generates a trajectory according to
$
e_k\sim \pi_\theta(\cdot\mid \tau_k,C_k),
$
where $\pi_\theta$ denotes the underlying policy, such as an LLM-based agent.

\textbf{Memory update.}
After each task, the agent stores the task, execution trajectory, and outcome:
$
M_{k+1}=M_k\cup\{(\tau_k,e_k,y_k)\}.
$
Methods that retain abstractions also update a separate store $S_k$. Let $\mathcal A_k$ denote new candidates synthesized from available records. Then
$
S_{k+1}=\mathcal U_k(S_k,\mathcal A_k;M_{k+1}).
$
Here, $\mathcal U_k$ decides how to incorporate these candidates and maintain existing entries, with $M_{k+1}$ available as evidence. It may simply append all candidates or apply filtering, pruning, or merging.

Anti-persistence updates $M_k$ but keeps $S_k=\emptyset$: guidance generated for the current task is discarded after execution. The execution records remain available for later retrieval and synthesis.

\textbf{Feedback through experience.}
A stored abstraction $s_{\mathrm{parent}}$ can enter the agent's context during task execution, which produces trajectory $e_j$. Once execution is complete, its record may be included in retrieved records $H$ and used to synthesize a new abstraction $s_{\mathrm{candidate}}$:
$
s_{\mathrm{parent}}\longrightarrow e_j\longrightarrow H\longrightarrow s_{\mathrm{candidate}}.
$
An error in the old abstraction may thus influence execution and reappear in newly synthesized guidance. Appendix~\ref{app:dynamics} separately analyzes exposure to flagged abstractions, generation of flagged candidates, and their admission to persistent memory.
\begin{algorithm}[t]
\caption{\textsc{Anti-Persistence}: local mode selection and task execution}
\label{alg:synth}
\fontsize{9}{11.5}\selectfont
\begin{algorithmic}[1]
\setlength{\itemsep}{1pt}
\Require Task $\tau$; stores $M_k,\mathcal L_k$; modes $\mathcal Z$; budgets $B,\hat B$; coefficients $c,\gamma,\lambda$.
\Statex \hspace{\algorithmicindent}\textit{Local branch:} nonempty trusted neighborhood; positive weight for every mode;
\Statex \hspace{\algorithmicindent}no warmup or exploration override.
\Ensure Trajectory $e_\tau^{(k)}$, outcome $y_\tau^{(k)}$, and updated stores $M_{k+1},\mathcal L_{k+1}$.
\State $\mathcal{N}_{\hat B}(\tau)\gets R(\tau,\mathcal L_k,\hat B)$ \Comment{Retrieve local feedback}
\State Index entries $(\tau_i,\mu_i,y_i,r_i,t_i)$ by $\mathcal I_{\hat B}(\tau)=\{1,\ldots,|\mathcal N_{\hat B}(\tau)|\}$
\State $t_{\max}\gets\max_{i\in\mathcal I_{\hat B}(\tau)}t_i$
\For{$i\in\mathcal I_{\hat B}(\tau)$}
    \State $w_i\gets\max(\mathrm{sim}(\tau_i,\tau),0)\cdot\gamma^{t_{\max}-t_i}$ \Comment{Similarity and recency}
\EndFor
\For{each mode $\mu\in\mathcal Z$}
    \State $I_\mu\gets\{i\in\mathcal I_{\hat B}(\tau):\mu_i=\mu\}$;\quad $n_\mu\gets|I_\mu|$
    \State $\hat r(\mu\mid\tau)\gets\frac{\sum_{i\in I_\mu}w_i r_i}{\sum_{i\in I_\mu}w_i}$ \Comment{Weighted reward}
    \State $U(\mu\mid\tau)\gets\hat r(\mu\mid\tau)+c\sqrt{\frac{\log|\mathcal N_{\hat B}(\tau)|}{\max(n_\mu,1)}}$ \Comment{Add exploration bonus}
\EndFor
\State $\mu^\star\gets\arg\max_{\mu\in\mathcal Z}U(\mu\mid\tau)$
\State $H_\tau\gets R_{\mu^\star}(\tau,M_k,B)$ \Comment{Mode-specific retrieval}
\State $\sigma_\tau\gets\phi(H_\tau,\mu^\star)$ \Comment{$\phi(H_\tau,\mathrm{none})=H_\tau$}
\State $(e_\tau^{(k)},y_\tau^{(k)})\gets\pi_\theta(\tau;\,C_\tau^{(k)}=\sigma_\tau)$ \Comment{Execute with guidance}
\State $\kappa_\tau\gets\text{realized extra synthesis-call count}$
\State $r_\tau^{(k)}\gets\operatorname{clip}(y_\tau^{(k)}-\lambda\kappa_\tau,-1,1)$ \Comment{Account for synthesis cost}
\State $M_{k+1}\gets M_k\cup\{(\tau,e_\tau^{(k)},y_\tau^{(k)})\}$;\quad $\mathcal L_{k+1}\gets\mathcal L_k$
\If{the attempt is eligible (\cref{app:method:outcome-log})}
    \State Append $(\tau,\mu^\star,y_\tau^{(k)},r_\tau^{(k)},t_{\mathrm{new}})$ to $\mathcal L_{k+1}$
\EndIf
\State \textnormal{\bfseries Discard} $\sigma_\tau$ \Comment{Keep synthesized guidance temporary}
\end{algorithmic}
\end{algorithm}
\subsection{Algorithm Design}
Anti-persistence separates the use of an abstraction from its storage for later reuse (\cref{subsec:problem-setup}). It retrieves interaction records, generates guidance when needed, and discards that guidance after execution.

\begin{tcolorbox}[
    breakable,
    colback=gray!4!white,
    colframe=gray!55!black,
    boxrule=0.4pt,
    left=4pt,
    right=4pt,
    top=1pt,
    bottom=1pt,
    before skip=10pt,
    after skip=10pt
]
\centering\scriptsize\ttfamily
\textbf{Persistent abstraction:}\quad
trajectory $\to$ \colorbox{red!15}{abstraction} $\to$ storage $\to$ reuse \\[2pt]
\textbf{Anti-persistence (ours):}\quad
trajectory $\to$ storage $\to$ retrieval $\to$ \colorbox{green!18}{on-demand abstraction}
\end{tcolorbox}

\textbf{What persists.}
The agent retains two kinds of records. Trajectory memory $M_k$ stores tasks, executions, and outcomes that can be retrieved as examples. The outcome log $\mathcal L_k$, defined below, records mode choices and their feedback to support later choices. Neither store retains synthesized guidance, and the abstraction store $S_k$ remains empty.

These records preserve what occurred, without certifying that the agent's actions or explanations were correct. Temporary guidance may affect a stored trajectory and thereby influence later synthesis. Discarding guidance therefore prevents its direct reuse as a stored object but leaves indirect feedback through trajectories possible.

\textit{How to use retrieved experience.}
For each task, the agent chooses whether to use interaction records directly or synthesize guidance from them. We define four modes:
\begin{align}
\mathcal{Z}
=
\{
\mathrm{none},\;
\mathrm{imitation},\;
\mathrm{strategic},\;
\mathrm{corrective}
\}.
\end{align}
The $\mathrm{none}$ mode supplies retrieved records directly, so it uses memory without an additional synthesis call. The $\mathrm{imitation}$ mode extracts reusable action patterns and their conditions from successful trajectories. The $\mathrm{strategic}$ mode uses successful trajectories to organize higher-level subgoals and plans. The $\mathrm{corrective}$ mode analyzes failed trajectories to generate avoidance guidance, using a successful trajectory as a contrast when available. Full definitions appear in \cref{app:method:modes}.

\textit{Feedback for mode selection.}
The agent uses past outcomes to assess which mode may suit the current task. We formulate this choice as a lightweight contextual bandit over $\mathcal Z$, supported by the outcome log
\begin{align}
\mathcal{L}_k
=
\{
(\tau_j,\mu_j,y_j,r_j,t_j):j<k,\ j\in\mathcal J_k
\},
\end{align}
where $\mathcal J_k$ indexes eligible completed attempts (\cref{app:method:outcome-log}). Each entry records the task, selected mode, raw outcome $y_j$, cost-adjusted reward $r_j$, and logical timestamp $t_j$. The reward guides mode selection, while the raw outcome preserves task performance separately from the cost penalty.

\textit{Two retrieval steps.}
On the local-selection branch, the controller first retrieves similar tasks from $\mathcal L_k$ to choose a mode. It then retrieves trajectories from $M_k$ according to that mode. The first retrieval supplies feedback for choosing how to use experience; the second supplies the experience itself. Their budgets are $\hat B$ and the mode-specific base budget $B$, respectively.

\textbf{Selecting a mode from local feedback.}
For task $\tau$, the first retrieval returns up to $\hat B$ nearest outcome-log entries,
$
\mathcal{N}_{\hat{B}}(\tau)
=
\{(\tau_i,\mu_i,y_i,r_i,t_i)\}_{i\in \mathcal{I}_{\hat{B}}(\tau)}.
$
The controller compares modes using their recorded rewards, weighted by relevance and recency.

\textit{Cost-adjusted reward.}
Each attempt receives a reward that combines task outcome with additional synthesis cost,
$
r_i = \operatorname{clip}(y_i-\lambda\kappa_i,-1,1),
$
where $\kappa_i$ counts actual extra synthesis calls, including zero when synthesis is skipped. The coefficient $\lambda\ge0$ controls the penalty, which applies only to synthesis calls.

\textit{Relevance weighting.}
Each retrieved entry receives a weight based on task similarity and recency,
$
w_i = \max(\mathrm{sim}(\tau_i,\tau),0)\cdot \gamma^{t_{\max}-t_i},
$
where $t_{\max}$ is the latest timestamp in the retrieved neighborhood and $\gamma\in(0,1]$ controls the weight of older feedback. Anchoring at $t_{\max}$ rescales all $\gamma^{k-t_i}$ weights equally and leaves normalized estimates unchanged. Decay affects only mode-selection feedback, without changing trajectory retrieval. An older mode evaluation may become less representative as memory grows, even when its trajectory remains a useful example.

\textit{Local utility estimation.}
For each mode, the controller averages the rewards of retrieved attempts that used it. If their total weight is positive, $\sum_{i:\mu_i=\mu}w_i>0$, the estimate is
\[
\hat{r}(\mu\mid\tau)
=
\frac{
\sum_{i\in\mathcal{I}_{\hat{B}}(\tau):\,\mu_i=\mu}
w_i r_i
}{
\sum_{i\in\mathcal{I}_{\hat{B}}(\tau):\,\mu_i=\mu}
w_i
}.
\]
This estimate summarizes the mode's observed performance and synthesis cost on similar tasks.

\textit{Mode selection.}
The controller adds an exploration bonus to this estimate and selects the highest-scoring mode:
\begin{align}
\mu^\star
&=
\arg\max_{\mu\in\mathcal{Z}}
\Bigg(
\hat{r}(\mu\mid\tau)
+
c \sqrt{
\frac{\log |\mathcal{N}_{\hat{B}}(\tau)|}{\max(n_\mu,1)}
}
\Bigg),
\end{align}
where
$
n_\mu
=
\left|
\{i\in\mathcal{I}_{\hat{B}}(\tau):\mu_i=\mu\}
\right|.
$
Here, $n_\mu$ counts retrieved attempts using mode $\mu$. For $c>0$, the bonus encourages revisiting modes with fewer local observations. This UCB-inspired rule balances observed reward with exploration. It carries no confidence-bound or regret guarantee for weighted, nonstationary observations. Algorithm~\ref{alg:synth} describes the local branch with a nonempty trusted neighborhood and positive weight for every compared mode. Unsupported means are undefined; initialization, fallback, and exploration overrides use separate branches (\cref{app:method:hyperparams}).

\textit{Execution and feedback.}
After selection, the agent retrieves mode-appropriate trajectories, uses them directly or synthesizes temporary guidance, and executes the task. During training, execution records enter $M_k$ and eligible feedback enters $\mathcal L_k$. Guidance is discarded after use; evaluation leaves both stores unchanged. New records change subsequent estimates and choices. The backbone, mode definitions, synthesis prompts, retrieval procedures, and selection rule remain fixed.

\begin{table}[t]
\centering
\caption{\small Held-out task success rate (\%, mean $\pm$ std over 3 seeds) on ALFWorld, WebShop, and ScienceWorld with three backbones. Anti-persistence is compared with agent baselines and persistent-memory variants.}
\vspace{5pt}
\label{tab:head-to-head}
\small
\renewcommand{\arraystretch}{1.2}
\setlength{\tabcolsep}{4pt}
\begin{tabular*}{\linewidth}{@{\extracolsep{\fill}}lccccccccc@{}}
\toprule
 & \multicolumn{3}{c}{Llama-4-Scout} & \multicolumn{3}{c}{Qwen3-Coder-Next} & \multicolumn{3}{c}{Gemma-4-31B-it} \\
\cmidrule(lr){2-4} \cmidrule(lr){5-7} \cmidrule(lr){8-10}
Method & ALF & Web & Sci & ALF & Web & Sci & ALF & Web & Sci \\
\midrule
ReAct          & 29.3$\pm$3.1 & 33.3$\pm$1.2 & 9.4$\pm$1.0  & 70.4$\pm$1.8 & 36.0$\pm$2.0 & 27.2$\pm$2.6 & 72.7$\pm$1.2 & 43.3$\pm$2.3 & 44.2$\pm$1.2 \\
Reflexion      & 46.7$\pm$6.1 & 45.3$\pm$4.2 & 22.8$\pm$1.0 & 87.3$\pm$3.1 & 47.3$\pm$1.2 & 33.9$\pm$3.5 & \textcolor{blue}{\textbf{98.0$\pm$0.8}} & 50.0$\pm$2.0 & 44.5$\pm$2.6 \\
ExpeL          & 7.2$\pm$3.2  & 38.0$\pm$5.3 & 5.0$\pm$1.9  & 88.0$\pm$2.0 & 47.3$\pm$2.3 & 32.8$\pm$4.8 & 96.7$\pm$2.3 & 38.6$\pm$1.9 & 51.3$\pm$1.2 \\
AutoRefine     & 36.3$\pm$3.2 & 37.3$\pm$2.3 & 15.3$\pm$1.7 & 83.7$\pm$6.7 & 40.7$\pm$2.3 & 27.8$\pm$1.0 & 88.0$\pm$4.0 & 48.0$\pm$4.0 & 43.3$\pm$0.0 \\
\midrule
Traj. Mem.   & 41.6$\pm$1.7 & 45.3$\pm$1.2 & 20.0$\pm$1.7 & 84.4$\pm$2.6 & 50.7$\pm$1.2 & 43.8$\pm$3.7 & 96.0$\pm$1.0 & 48.0$\pm$0.2 & 51.7$\pm$1.7 \\
Abs. Mem.    & 42.7$\pm$1.2 & 37.3$\pm$1.2 & 18.9$\pm$1.0 & 84.7$\pm$2.4 & 38.3$\pm$3.5 & 34.0$\pm$1.0 & 92.7$\pm$2.3 & 45.3$\pm$4.2 & 46.7$\pm$1.7 \\
Joint Mem.   & 33.3$\pm$2.3 & 35.3$\pm$1.2 & 7.2$\pm$1.9  & 81.9$\pm$0.2 & 47.0$\pm$1.7 & 36.3$\pm$0.6 & 95.3$\pm$3.1 & 51.3$\pm$1.2 & \textcolor{blue}{\textbf{56.3$\pm$2.4}} \\
\midrule
\textbf{Anti-pers.} & \textcolor{blue}{\textbf{54.7$\pm$2.9}} & \textcolor{blue}{\textbf{46.3$\pm$1.8}} & \textcolor{blue}{\textbf{30.6$\pm$1.0}} & \textcolor{blue}{\textbf{96.0$\pm$1.4}} & \textcolor{blue}{\textbf{54.0$\pm$2.0}} & \textcolor{blue}{\textbf{52.6$\pm$1.5}} & 97.3$\pm$1.2 & \textcolor{blue}{\textbf{53.3$\pm$3.1}} & 54.1$\pm$6.3 \\
\bottomrule
\end{tabular*}
\end{table}
\section{Experiments}
We evaluate held-out success and cost (\cref{subsec:main-results}), performance across memory updates (\cref{subsec:cross-iteration-analysis}), and individual design choices (\cref{subsec:ablation}). Provenance associations (\cref{app:full-results:descendant}) and controlled content and retention studies (\cref{app:additional-analyses}) provide complementary evidence about the role of inferred guidance.

\subsection{Experimental Setup}
\label{subsec:setup}

\textbf{Benchmarks.}
We evaluate anti-persistence on ALFWorld~\citep{shridhar2020alfworld}, WebShop~\citep{yao2022webshop}, and ScienceWorld~\citep{wang2022scienceworld}, using fixed training and held-out task partitions across methods and iterations following recent agent-memory work~\citep{zhao2024expel,chen2024automanual,shinn2023reflexion,du2026memory}. Partition sizes are reported in \cref{tab:task-counts}.

\textbf{Backbones.}
The main experiments use Llama-4-Scout \citep{meta2025llama4}, Qwen3-Coder-Next \citep{cao2026qwen3}, and Gemma-4-31B-it \citep{deepmind2026gemma4}, spanning distinct model families and capabilities to examine how memory effects vary across backbones.

\textbf{Baselines.}
We compare four memory configurations: \textbf{trajectory memory} retrieves stored trajectories, \textbf{abstraction memory} retrieves stored inferred abstractions, \textbf{joint memory} retrieves both, and \textbf{anti-persistence} constructs temporary guidance from stored trajectories. We also evaluate four agent baselines: ReAct \citep{yao2022react}, Reflexion \citep{shinn2023reflexion}, ExpeL \citep{zhao2024expel}, and AutoRefine \citep{qiu2026autorefine}.

\textbf{Protocol.}
Each method undergoes $K = 10$ iterations with fixed backbone parameters, with memory and related state updated exclusively from training episodes. After each iteration, we evaluate the full held-out set with this state frozen: retrieval is permitted, but stored trajectories, abstractions, outcome logs, retrieval metadata, maintenance scores, and pruning decisions remain unchanged, and evaluation logs never enter subsequent updates. Under a shared per-task context budget, the base trajectory-retrieval budget is $B = 5$, with one additional successful trajectory available as a contrast for corrective mode (\cref{app:method:similarity}). Anti-persistence uses up to $\hat{B}=10$ outcome-log entries for mode selection.

\textbf{Metrics.}
We report held-out success over 3 seeds (\{42,43,44\}), with error bars denoting standard deviations across seeds, and measure cost as the mean number of synthesis and execution LLM calls per held-out task.

\subsection{Held-Out Performance and LLM-Call Cost}
\label{subsec:main-results}
\Cref{tab:head-to-head} compares held-out success at $K=10$ against four agent baselines and three persistent-memory variants. \Cref{fig:cost-summary} reports average per-task LLM-call cost for the four memory configurations.

\begin{figure}[t]
\centering
\begin{subfigure}{\linewidth}
  \centering
  \includegraphics[width=\linewidth]{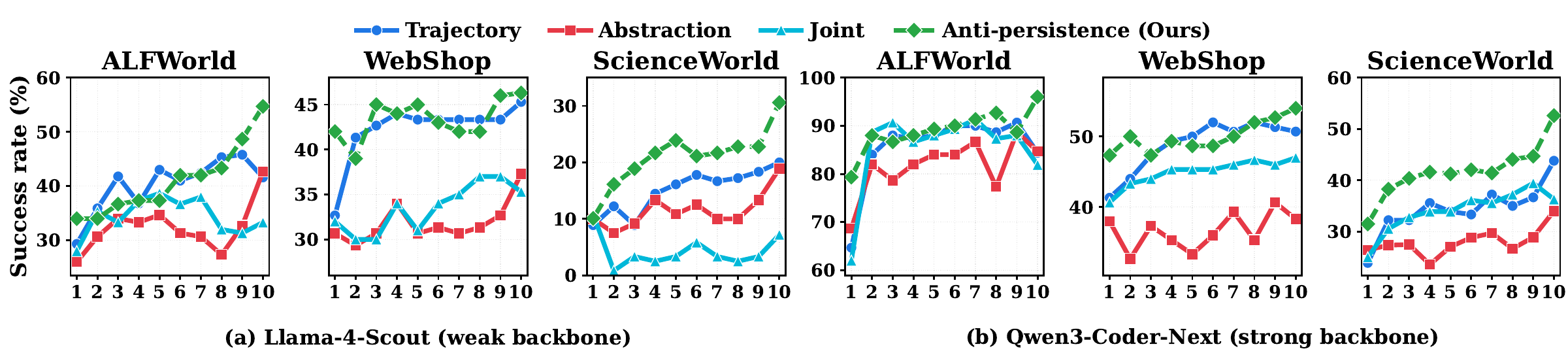}
  \vspace{-15pt}
  \caption{\small Per-iteration held-out task success rate.}
  \label{fig:divergence}
\end{subfigure}
\par\vspace{4pt}
\begin{subfigure}{\linewidth}
  \centering
  \includegraphics[width=\linewidth]{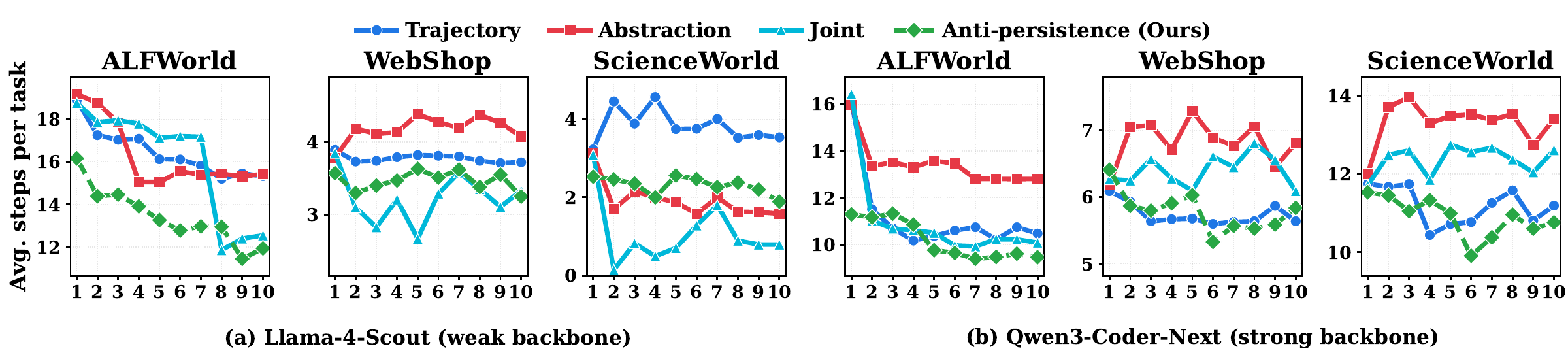}
  \vspace{-15pt}
  \caption{\small Per-iteration mean trajectory length.}
  \label{fig:avg-steps}
\end{subfigure}
\vspace{-15pt}
\caption{\small Per-iteration performance of four memory configurations over ten memory-update iterations across three benchmarks and two backbones.}
\label{fig:divergence-pair}
\end{figure}

\textbf{Anti-persistence improves overall held-out success.} Anti-persistence achieved the highest success rate in seven of the nine benchmark--backbone settings (\Cref{tab:head-to-head}). Averaged equally across settings, it reached $59.9\%$, compared with $53.5\%$ for trajectory memory, $49.0\%$ for abstraction memory, and $49.3\%$ for joint memory. It achieved higher mean success than trajectory and abstraction memory in all nine settings, and joint memory in eight. It also exceeded Reflexion, the strongest agent baseline on average, which reached $52.9\%$. The memory variants further clarify the role of stored abstractions: both abstraction memory and joint memory underperformed trajectory memory in seven of the nine settings. Retaining inferred guidance, either alone or alongside execution records, therefore did not consistently improve success. These comparisons favor anti-persistence overall while leaving the preferred design setting-dependent. Section~\ref{subsec:cross-iteration-analysis} examines how these differences develop across updates.

\begin{wrapfigure}[12]{r}{0.42\linewidth}
\centering
\vspace{-14pt}
\includegraphics[width=\linewidth]{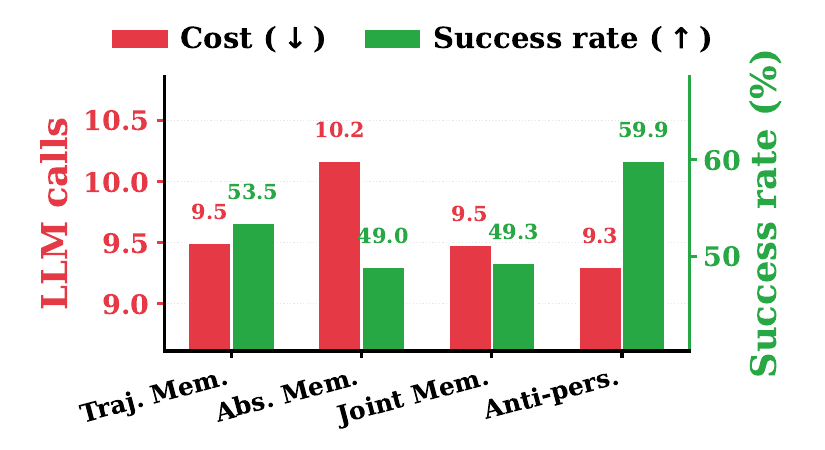}
\vspace{-10pt}
\caption{\small Mean LLM-call cost and held-out success across nine settings.}
\label{fig:cost-summary}
\vspace{-6pt}
\end{wrapfigure}

\textbf{Anti-persistence has the lowest average LLM-call cost.} Across the nine settings, anti-persistence achieved the lowest mean LLM-call cost among the four memory configurations, alongside the highest mean success (\Cref{fig:cost-summary}). Generating temporary guidance therefore did not translate into a higher average call count. Total call usage reflects both guidance synthesis and task execution. The $\mathrm{none}$ mode reuses retrieved trajectories directly without a synthesis call, while shorter execution trajectories in some settings offset calls used to construct guidance. These results support the efficiency of direct experience reuse and selective synthesis.

\subsection{Dynamics Across Self-Improvement Rounds}
\label{subsec:cross-iteration-analysis}
To characterize behavior across memory updates, we first examine how the four memory configurations perform over successive rounds (\Cref{fig:divergence-pair}). We then compare the final results of abstraction memory, AutoRefine, and anti-persistence (\Cref{fig:maintenance}).

\textbf{Memory Behavior Across Iterations.}
Memory effects varied across backbones and benchmarks (\Cref{fig:divergence-pair}). Anti-persistence achieved the highest final mean success in all six settings, although it did not lead at every iteration. Qwen's abstraction-memory performance fluctuated more on WebShop than on ALFWorld, while Llama's joint-memory performance declined in later rounds, particularly on ScienceWorld. These trajectories show that combining trajectory and abstraction memory did not consistently improve performance. Trajectory length provides complementary context (\Cref{fig:avg-steps}), but shorter rollouts can reflect either efficient execution or early failure. Their coincidence with lower success under joint memory is consistent with premature termination, without establishing its cause.

\textbf{Comparison with Memory Maintenance.}
\Cref{fig:maintenance} presents the final results for abstraction memory, AutoRefine, and anti-persistence from \cref{tab:head-to-head}. AutoRefine improved mean success over abstraction memory in one of the six settings, matched it in one, and performed worse in four. Anti-persistence achieved higher mean success than AutoRefine in all six. AutoRefine also changes retrieval and synthesis (\cref{app:setup-details:autorefine}), so this comparison does not isolate maintenance effects or rule out more effective policies. Deleting an abstraction leaves stored trajectories unchanged unless they are explicitly revised (Remark~\ref{rem:parent-deletion}). Maintenance can nevertheless affect subsequent exposure, flagged generation, and persistent admission (Proposition~\ref{prop:residual-risk}).

\begin{figure}[!t]
\centering
\captionsetup{skip=4pt}
\begin{minipage}[t]{0.48\linewidth}
  \vspace{0pt}
  \begin{minipage}[c][100pt][c]{\linewidth}
  \centering
  \includegraphics[width=\linewidth,height=98pt,keepaspectratio]{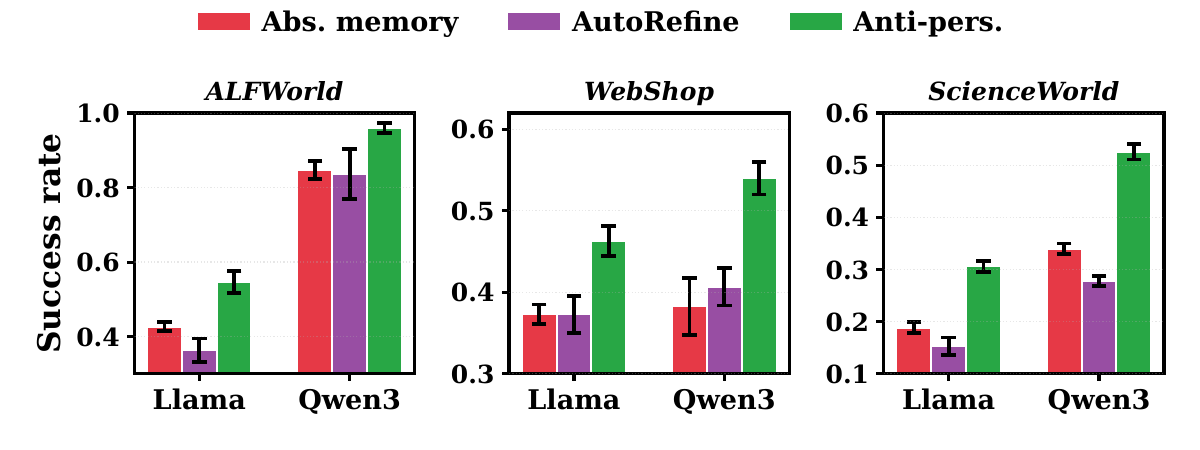}
  \end{minipage}
\caption{\small Final held-out success for abstraction memory, AutoRefine, and anti-persistence from \cref{tab:head-to-head}. Bars show means over 3 seeds; error bars show standard deviations.}
  \label{fig:maintenance}
\end{minipage}\hfill
\begin{minipage}[t]{0.48\linewidth}
  \vspace{0pt}
  \begin{minipage}[c][100pt][c]{\linewidth}
  \centering
  \includegraphics[width=\linewidth,height=98pt,keepaspectratio]{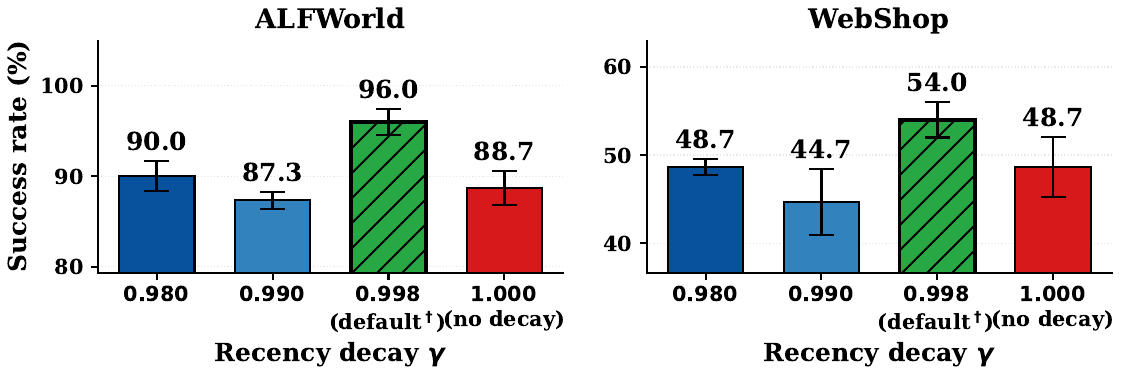}
  \end{minipage}
  \caption{\small Sensitivity to recency decay $\gamma$ on Qwen3-Coder-Next. Bars show means over $3$ seeds; error bars show standard deviation across seeds.}
  \label{fig:sensitivity-gamma}
\end{minipage}
\par\vspace{12pt}
\begin{minipage}[t]{0.48\linewidth}
  \vspace{0pt}
  \begin{minipage}[c][100pt][c]{\linewidth}
  \centering
  \includegraphics[width=\linewidth,height=98pt,keepaspectratio]{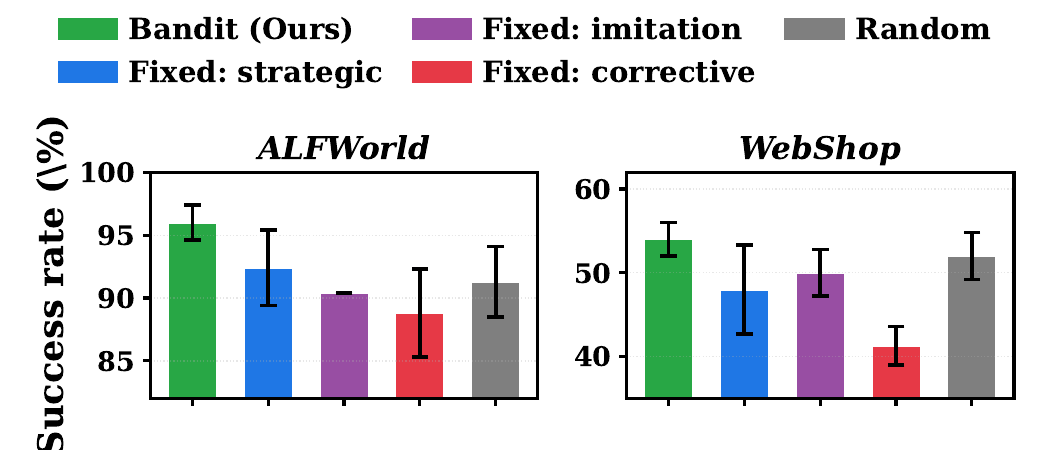}
  \end{minipage}
  \caption{\small Mode-selection ablation on Qwen3-Coder-Next, comparing anti-persistence's contextual bandit with fixed-mode and random policies.}
  \label{fig:ablation-mode}
\end{minipage}\hfill
\begin{minipage}[t]{0.48\linewidth}
  \vspace{0pt}
  \begin{minipage}[c][100pt][c]{\linewidth}
  \centering
  \includegraphics[width=\linewidth,height=98pt,keepaspectratio]{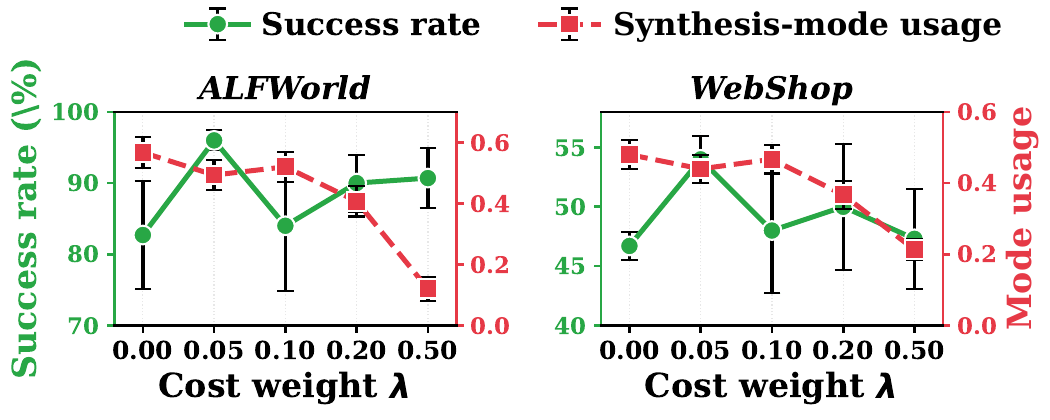}
  \end{minipage}
\caption{\small Sensitivity to $\lambda$ on Qwen3-Coder-Next. Solid curves use the left axis for success rate; dashed curves use the right axis for synthesis-mode usage (fraction of tasks selecting a synthesis mode).}
  \label{fig:ablation-lambda}
\end{minipage}
\end{figure}

\subsection{Ablations and Design Analysis}
\label{subsec:ablation}

\textbf{Sensitivity to Recency Decay $\gamma$.}
The recency factor $\gamma$ controls how quickly older outcome-log feedback loses weight during mode selection. Smaller values discount older feedback more strongly, whereas $\gamma=1$ removes temporal decay while retaining task-similarity weighting. \Cref{fig:sensitivity-gamma} compares four values on Qwen3-Coder-Next. The default $\gamma=0.998$ achieved the highest mean success among the tested values: $96.0\%$ on ALFWorld and $54.0\%$ on WebShop. At $\gamma=0.980$, success was $90.0\%$ on ALFWorld and $48.7\%$ on WebShop; at $\gamma=0.990$, it was $87.3\%$ and $44.7\%$, respectively. Without decay ($\gamma=1$), success was $88.7\%$ and $48.7\%$. Both faster decay and no decay therefore underperformed the default. These results favor retaining historical feedback while modestly emphasizing recent outcomes, although success did not vary monotonically with $\gamma$.

\textbf{Benefits of Adaptive Mode Selection.}
To assess the value of adapting mode selection to tasks and feedback, \Cref{fig:ablation-mode} compares the contextual bandit with fixed strategic, imitation, and corrective policies and with random selection on Qwen3-Coder-Next. The bandit achieved the highest success rate on both ALFWorld ($96.0\%$) and WebShop ($54.0\%$). The strongest fixed policy differed by benchmark: strategic performed best on ALFWorld and imitation on WebShop. The bandit's advantage over fixed and random selection supports using task context and accumulated feedback to decide how retrieved experience should guide execution.

\textbf{Sensitivity to Cost Weight $\lambda$.}
The cost weight $\lambda$ penalizes additional synthesis calls in the mode-selection reward; $\lambda=0$ removes this penalty. \Cref{fig:ablation-lambda} compares success and synthesis-mode usage across cost weights. The default $\lambda=0.05$ achieved the highest mean success among the tested values on both ALFWorld ($96.0\%$) and WebShop ($54.0\%$), consistent with \cref{tab:head-to-head}. Removing the penalty increased synthesis-mode usage but reduced mean success relative to the default; stronger penalties ($\lambda\in\{0.20,0.50\}$) reduced both. These results favor selective synthesis with a modest penalty in the tested settings. Usage is the fraction of tasks selecting a synthesis mode, even when missing records prevent an actual call.

\textbf{Mode Selection Patterns Across Settings.}
To examine how the controller allocated modes across settings, \Cref{fig:bandit-mode-freq} reports selection frequencies averaged over iterations for nine $(\text{backbone}\times\text{benchmark})$ combinations. All settings used multiple modes. On WebShop, stronger backbones favored $\mathrm{imitation}$ and Llama-4-Scout favored $\mathrm{strategic}$, while ALFWorld and ScienceWorld showed more balanced usage. The $\mathrm{none}$ mode, which reuses retrieved trajectories without synthesis, had a nonzero mean frequency in every setting, so the controller sometimes reused experience directly. These averages describe overall allocation, without showing whether a mode was selected in every round or establishing policy convergence.

\begin{table}[!htbp]
\centering
\caption{\small Held-out success with abstractions persisted or discarded (\%, mean $\pm$ s.d.; 3 seeds).}
\label{tab:discard-vs-persist}
\small
\setlength{\tabcolsep}{3pt}
\renewcommand{\arraystretch}{1.1}
\begin{tabular*}{0.85\linewidth}{@{\extracolsep{\fill}}llccc@{}}
\toprule
Model & Bench & Discard (Ours) & Persist & $\Delta$ \\
\midrule
Llama-4-Scout    & ALF & \textbf{54.7$\pm$2.9} & 26.7$\pm$6.1 & +28.0 \\
Llama-4-Scout    & Web & \textbf{46.3$\pm$1.8} & 34.0$\pm$4.0 & +12.3 \\
Qwen3-Coder-Next & ALF & \textbf{96.0$\pm$1.4} & 92.7$\pm$1.2 & \phantom{0}+3.3 \\
Qwen3-Coder-Next & Web & \textbf{54.0$\pm$2.0} & 47.3$\pm$3.1 & \phantom{0}+6.7 \\
\bottomrule
\end{tabular*}
\end{table}

\textbf{Persisting versus Discarding Abstractions.}
To assess abstraction persistence, we compared variants that used synthesized guidance for the current task when synthesis was invoked, but differed in whether it was subsequently discarded or stored for later retrieval. Discarding improved success in all four settings (\Cref{tab:discard-vs-persist}), with larger gains for Llama-4-Scout and positive gains for Qwen3-Coder-Next. These comparisons favor using synthesized guidance temporarily, without storing it for later tasks, in the evaluated settings.

\begin{figure}[t]
\centering
\begin{subfigure}{0.32\linewidth}
  \centering
  \includegraphics[width=\linewidth]{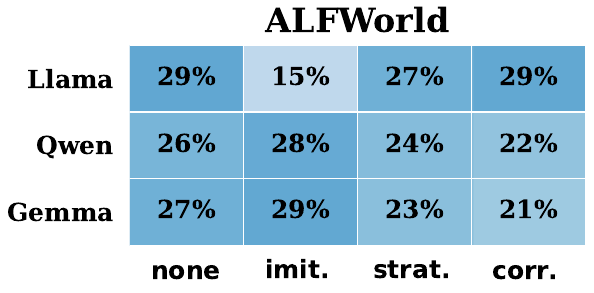}
  \label{fig:bandit-mode-freq-alfworld}
\end{subfigure}
\hfill
\begin{subfigure}{0.32\linewidth}
  \centering
  \includegraphics[width=\linewidth]{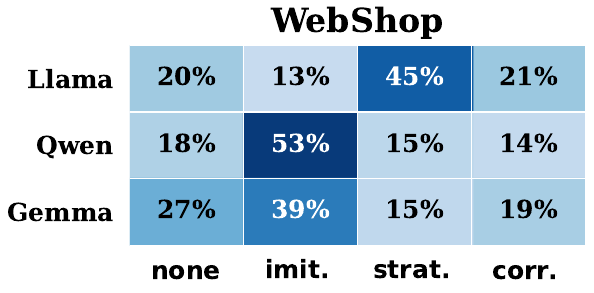}
  \label{fig:bandit-mode-freq-webshop}
\end{subfigure}
\hfill
\begin{subfigure}{0.32\linewidth}
  \centering
  \includegraphics[width=\linewidth]{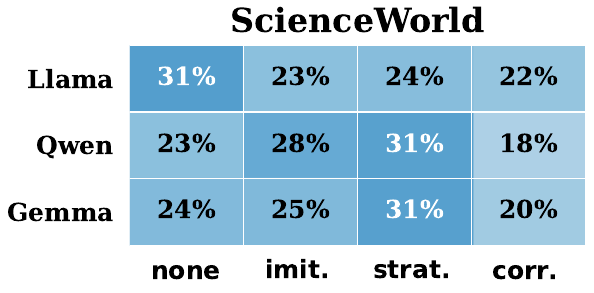}
  \label{fig:bandit-mode-freq-scienceworld}
\end{subfigure}
\vspace{-10pt}
\caption{\small Mode-selection frequencies averaged over iterations $1$--$10$. Panels denote benchmarks, rows denote backbones, and columns denote modes ($\mathrm{none}$, imitation, strategic, corrective). Darker cells indicate more frequent selection.}
\label{fig:bandit-mode-freq}
\end{figure}

\section{Conclusion}

We introduced anti-persistence for memory-based self-improvement: agents accumulate trajectories and outcomes and adaptively construct temporary guidance with a fixed backbone and synthesis procedure. The design removes direct storage and reuse of synthesized abstractions, while indirect feedback through trajectories remains possible. Our analysis distinguishes exposure, flagged generation, and persistent admission without implying that all maintenance policies fail. Across three benchmarks and three backbones, anti-persistence achieved higher overall held-out success with fewer LLM calls on average. These findings support separating experience accumulation from abstraction persistence and studying when verification can justify selective retention.

\bibliography{references}
\bibliographystyle{unsrtnat}

\clearpage
\appendix
\onecolumn
\renewcommand{\topfraction}{0.9}
\renewcommand{\bottomfraction}{0.85}
\renewcommand{\textfraction}{0.1}
\renewcommand{\floatpagefraction}{0.8}
\setlength{\parskip}{4.5pt plus 1pt minus 0.5pt}

\section*{Contents of the Appendix}
\startcontents[appendix]

\begingroup
  \printcontents[appendix]{app}{1}[2]{%
    \setlength{\parskip}{0pt}%
    \contentsmargin{20pt}%
  }
\endgroup

\clearpage

\section{Reproduction Dynamics in Abstraction Memory}
\label{app:dynamics}

This appendix formalizes the relationship between historical exposure to stored abstractions and the generation and retention of flagged candidates by distinguishing provenance, operational bias labels, and admission to persistent memory. Each abstraction symbol denotes a unique, immutable object identifier to which labels and parent sets refer; rewriting or merging creates a new identifier. The results concern a single synthesis invocation under stated conditions and establish neither progressive improvement nor inevitable long-run degradation, nor do they imply that all maintenance policies fail.

\subsection{Operational Bias Labels and Historical Provenance}
\label{app:dynamics-defs}

\begin{definition}[Operational bias labels]
\label{def:biased}
Fix an evaluation and labeling protocol $\mathcal E$. Let $\mathcal S_{\mathrm{lab}}$ be the abstractions with defined labels under that protocol, and let $b_{\mathcal E}:\mathcal S_{\mathrm{lab}}\to\{0,1\}$. Define the flagged set
\[
\mathcal B_{\mathcal E}=\{s\in\mathcal S_{\mathrm{lab}}:b_{\mathcal E}(s)=1\}.
\]
An abstraction in $\mathcal B_{\mathcal E}$ is called operationally biased. Abstractions outside $\mathcal S_{\mathrm{lab}}$ are unclassified; absence of a flag does not certify that an abstraction is benign.
\end{definition}

The evaluation protocol and resulting labels remain fixed within each theoretical comparison, so all probability statements below concern operational flags rather than an unobserved intrinsic error rate. Our empirical protocol assigns these labels using associations with downstream success (\cref{app:full-results:descendant}), which do not identify the causal effect of an abstraction.

\paragraph{Historical provenance.}
For a stored execution record $m_j=(\tau_j,e_j,y_j)$, let $P(m_j)$ contain the identifiers of persistent abstractions retrieved into the agent's context when $e_j$ was generated. These identifiers remain attached to the record for analysis even after the corresponding abstractions are deleted, documenting historical exposure without by themselves establishing behavioral influence.

For a synthesis input $H\subseteq M_k$, define its historical parent set and flagged-parent exposure event by
\[
P(H)=\bigcup_{m\in H}P(m),
\qquad
C=\{P(H)\cap\mathcal B_{\mathcal E}\ne\emptyset\}.
\]
Execution records and abstraction identifiers remain distinct types throughout this analysis, and every parent in $P(m_j)$ must have existed when the record was generated, so parent sets never depend on future trajectories. We call $C$ an ``exposed input'' event without assuming that the input inherited any particular error from a parent.

For each synthesized candidate $s=\phi(H)$, record $P(s)=P(H)$ at creation. A directed edge $u\to s$ exists when $u\in P(s)$; descendants are objects reachable along these recorded synthesis edges, excluding the starting object itself.

\subsection{Exposure, Generation, and Persistent Admission}
\label{app:two-modes}

Consider a single synthesis invocation conditional on a fixed post-maintenance state, with its probability law encompassing task and mode selection, input retrieval, and synthesis randomness. Writing the candidate output as $s=\phi(H)$ with other conditioning arguments suppressed, define
\[
A=\{s\in\mathcal B_{\mathcal E}\},
\qquad
W=\{s\text{ is admitted as the same object to persistent abstraction memory}\}.
\]
Rewriting or merging a candidate creates a new object requiring its own label and admission analysis; all probabilities below refer to the same fixed state and probability law for the invocation.

\begin{definition}[Conditional reproduction coefficient]
\label{def:alpha}
Let $q=\Pr(C)$. If $q>0$, define
\[
\alpha=\Pr(A\mid C).
\]
If $\Pr(A\cap C)>0$, also define the conditional admission probability
\[
\eta=\Pr(W\mid A\cap C).
\]
The coefficient $\alpha$ is the flagged-output rate conditional on flagged-parent exposure. It is not an offspring count, a causal effect, or a growth factor.
\end{definition}

These quantities may vary with the state, retrieval policy, synthesis operator, and maintenance procedure. A positive $\alpha$ alone establishes neither an increase in the flagged-output rate relative to other inputs nor the accumulation of errors across successive iterations.

\subsection{What Deletion and Maintenance Do Establish}
\label{app:dynamics-propositions}

\begin{remark}[Scope of parent deletion]
\label{rem:parent-deletion}
Immediately before a deletion-only operation, let $S^-$ be the abstraction store and $D^-\subseteq S^-$ its currently retained descendants of a parent $s_0$. Descendants are identified by the historical provenance relation above, possibly over multiple generations, and exclude $s_0$ itself. If the operation deletes $R\subseteq S^-$ and does not alter execution records or their provenance, then
\[
S^+=S^-\setminus R,\qquad D^+=D^-\setminus R,\qquad M^+=M^-.
\]
In particular, deleting only $s_0$ leaves $D^-$ unchanged, whereas deleting descendants can remove them. Removing a parent does not automatically erase its recorded history, but the remaining influence need not be irreversible.
\end{remark}

These identities follow directly from the specified deletion operation, without implying that descendants survive other maintenance steps or that retained records are necessarily retrieved again.

\begin{proposition}[One-step generation and retention of flagged abstractions]
\label{prop:residual-risk}
Under the common law specified above, suppose $q>0$. Then
\[
\Pr(A\cap C)=q\alpha,\qquad \Pr(A)\ge q\alpha.
\]
If $q\alpha>0$, then
\[
\Pr(A\cap C\cap W)=q\alpha\eta,
\qquad
\Pr(A\cap W)\ge q\alpha\eta.
\]
Thus, positive $q$, $\alpha$, and $\eta$ imply a positive probability of generating and admitting a flagged candidate with flagged-parent exposure.
\end{proposition}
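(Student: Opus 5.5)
The plan is to apply the chain rule for conditional probability under the single common law fixed in \cref{app:two-modes}, together with monotonicity of probability. No structural property of retrieval, synthesis, or maintenance is needed, because both claims follow from how $\alpha$ and $\eta$ are defined in Definition~\ref{def:alpha}.

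\textbf{Step 1.} Since $q=\Pr(C)>0$, the coefficient $\alpha=\Pr(A\mid C)$ is well defined. Multiplying through by $q$ gives
\[
\Pr(A\cap C)=\Pr(C)\Pr(A\mid C)=q\alpha.
\]
Because $A\cap C\subseteq A$, monotonicity yields $\Pr(A)\ge q\alpha$. The slack in this inequality is $\Pr(A\cap C^{c})\ge 0$, the probability of a flagged output produced from an unexposed input.

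\textbf{Step 2.} Now assume $q\alpha>0$. By Step~1 this means $\Pr(A\cap C)>0$, so $\eta=\Pr(W\mid A\cap C)$ is well defined; this is exactly the condition Definition~\ref{def:alpha} requires. Applying the chain rule once more,
\[
\Pr(A\cap C\cap W)=\Pr(A\cap C)\,\Pr(W\mid A\cap C)=q\alpha\eta.
\]
Since $A\cap C\cap W\subseteq A\cap W$, monotonicity gives $\Pr(A\cap W)\ge q\alpha\eta$.

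\textbf{Step 3.} The concluding sentence of the proposition follows directly. If $q$, $\alpha$, and $\eta$ are all positive, then $q\alpha\eta>0$, so the event $A\cap C\cap W$ has positive probability. That event is precisely ``a flagged candidate is generated from an exposed input and admitted as the same object.''

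The only point needing care is bookkeeping rather than mathematics. All of $A$, $C$, and $W$ must be measurable events under the one law for the invocation, conditional on the fixed post-maintenance state. Admission $W$ must refer to the same object identifier as $s$, so that a rewritten or merged candidate is not counted under $W$. With that convention in place, the events are well defined and the argument above is complete.
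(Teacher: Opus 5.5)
Your proposal is correct and matches the paper's proof essentially line for line. The definition of conditional probability gives $\Pr(A\cap C)=q\alpha$, and, once $q\alpha>0$ guarantees that $\eta$ is well defined, it also gives $\Pr(A\cap C\cap W)=q\alpha\eta$, with the two inequalities following from the inclusions $A\cap C\subseteq A$ and $A\cap C\cap W\subseteq A\cap W$ and no independence assumption needed, just as in the paper.
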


\begin{proof}
Since $q>0$, the definition of conditional probability gives
$\Pr(A\cap C)=\Pr(C)\Pr(A\mid C)=q\alpha$.
The inclusion $A\cap C\subseteq A$ gives the first inequality. If $q\alpha>0$, conditioning on $A\cap C$ yields
$\Pr(A\cap C\cap W)=\Pr(A\cap C)\Pr(W\mid A\cap C)=q\alpha\eta$.
The inclusion $A\cap C\cap W\subseteq A\cap W$ gives the second inequality. No independence assumption is required.
\end{proof}

When $q=0$, flagged-parent exposure has zero probability and $\alpha$ is undefined; when $q>0$ but $\alpha=0$, generating a flagged candidate with flagged-parent exposure has zero probability and $\eta$ is undefined. Neither case justifies assigning a value to the undefined conditional quantity.

\paragraph{Scope of the result.}
Proposition~\ref{prop:residual-risk} provides a probability decomposition rather than a positive lower bound valid across all maintenance policies, because maintenance may change future exposure $q$, the conditional input or synthesis distribution determining $\alpha$, and the admission probability $\eta$. For example, excluding all exposed records sets $q=0$, while rejecting every flagged candidate prevents flagged candidates from entering persistent memory. By separating historical exposure, flagged generation, and persistent admission within a single synthesis invocation, the proposition identifies distinct components of the process without establishing inevitable amplification or a long-horizon performance ordering.

\subsection{Implications for Anti-Persistence}
\label{app:dynamics:anti-persistence}

Because anti-persistence admits no synthesized candidate to the abstraction store, $W$ is impossible and $\Pr(A\cap W)=0$ for every task-time synthesis invocation. This storage property does not establish $\Pr(A)=0$, since temporary abstractions may still be flagged and can influence stored trajectories, subsequent synthesis, and mode selection.

The historical exposure event $C$ tracks previously stored abstractions and therefore excludes feedback arising from temporary guidance. In a fresh anti-persistence run with no stored ancestors, $q=0$ and the corresponding $\alpha$ is undefined, not zero. Both raw trajectories and abstractions can carry the effects of past decisions into later tasks, with the extent of this influence determined by retrieval and use rather than memory type alone. The structural guarantee is consequently limited to the absence of stored abstractions and their direct reuse, while the performance benefit remains an empirical finding.

\section{Method Details}
\label{app:method}

This appendix specifies the abstraction modes, prompts, retrieval procedures, and hyperparameters used by anti-persistence, together with the outcome log that connects observed task results to subsequent mode selection.

\subsection{Abstraction Modes}
\label{app:method:modes}

The mode set $\mathcal{Z} = \{\mathrm{none}, \mathrm{imitation}, \mathrm{strategic}, \mathrm{corrective}\}$ provides direct access to factual context or one of three forms of on-demand abstraction, as introduced in \cref{sec:method}. While $\mathrm{none}$ passes retrieved records directly to the agent, $\mathrm{imitation}$ extracts reusable behavioral patterns, $\mathrm{strategic}$ develops higher-level task strategies, and $\mathrm{corrective}$ generates guidance for avoiding observed failures. When the required records are available, the three synthesis modes invoke $\phi$ at task time and discard its output after execution, using the input and output conventions in \cref{app:method:prompts}.

\paragraph{$\mu = \mathrm{none}$ (direct factual context).}
The agent receives factual context $H_\tau$, comprising retrieved trajectory records and observed outcomes, directly without a synthesis call, so $\sigma_\tau = \phi(H_\tau,\mathrm{none}) = H_\tau$ and $\kappa(\mathrm{none}) = 0$. When no records are available, including at initialization or after an embedding failure, $H_\tau = \emptyset$ and the mode reduces to the underlying ReAct policy.

\paragraph{$\mu = \mathrm{imitation}$ (behavioral-pattern abstraction).}
$\phi$ extracts reusable behavioral patterns from successful trajectories, including action--observation sequences, tool-use routines, and their applicability conditions, to provide task-specific guidance without copying a trajectory verbatim. This mode emphasizes reusable action patterns, whereas $\mathrm{strategic}$ focuses on higher-level sub-goal organization and planning. The resulting abstraction is injected under \texttt{\#\# Behavioral Patterns (imitation)} and discarded after execution, with $\kappa(\mathrm{imitation}) = 1$ when synthesis is invoked; if no successful trajectory is available, $\sigma_\tau = \emptyset$ and no synthesis call is charged.

\paragraph{$\mu = \mathrm{strategic}$ (multi-trace strategy distillation).}
$\phi$ uses up to the top-$B$ retrieved successful trajectories to generate a strategy covering sub-goal structure, tool-use patterns, and recovery heuristics, with a prompt that distinguishes transferable structure from instance-specific details. The resulting guidance is injected under \texttt{\#\# Task-Specific Strategy (strategic reflection)}, incurring $\kappa(\mathrm{strategic}) = 1$ when synthesis is invoked. If no successful trajectories are available, $\sigma_\tau = \emptyset$ and no call is charged.

\paragraph{$\mu = \mathrm{corrective}$ (failure-driven avoidance rules).}
$\phi$ uses up to the top-$B$ retrieved failure trajectories and one contrasting success when available, prompting an LLM to identify failure modes, critical decision points, and recurring error patterns. The output consists of avoidance rules of the form ``do NOT do $X$ because $Y$,'' optionally paired with corrective actions, and is injected under \texttt{\#\# Avoidance Rules (corrective reflection)}. Invoking synthesis incurs $\kappa(\mathrm{corrective}) = 1$.

\textbf{Design choice.} When no failure trajectories are available, $\phi$ returns $\sigma_\tau = \emptyset$ rather than falling back to $\mathrm{strategic}$, preventing strategic guidance from being attributed to the $\mathrm{corrective}$ arm during bandit updates.
\paragraph{Cost convention.}
We define $\kappa(\mu) \geq 0$ as the number of additional LLM calls made by mode $\mu$ relative to direct trajectory reuse, so direct factual context has $\kappa(\mathrm{none}) = 0$. Each synthesis mode ($\mathrm{imitation}$, $\mathrm{strategic}$, or $\mathrm{corrective}$) incurs $\kappa(\mu) = 1$ when invoked and zero when synthesis is skipped because the required records are unavailable. The cost weight $\lambda$ places these call counts on the reward scale used for success outcomes, with defaults reported in \cref{app:method:hyperparams}.

\subsection{Mode-Specific Prompts}
\label{app:method:prompts}

Each synthesis mode in \cref{sec:method} uses a prompt to produce $\sigma_\tau$ and a wrapper to present the guidance under a mode-specific header in the agent's context. We provide the strategic and corrective templates, describe the imitation interface, and specify the wrappers below, including the direct factual-context wrapper used by $\mathrm{none}$ to bypass synthesis.

\paragraph{Strategic synthesis prompt.}
The $\mathrm{strategic}$ mode prompts the synthesis LLM to distill a transferable strategy from up to the top-$B$ retrieved successful trajectories ($B=5$ by default).

\begin{synthprompt}[title={Strategic synthesis prompt}]
\begin{lstlisting}[style=promptlisting]
You are synthesizing a task-specific strategy for an agent.
The agent is about to attempt this task:
  "{instruction}"

Below are complete action-by-action trajectories from similar tasks
that SUCCEEDED. Each trajectory shows the exact tool calls made and
the observations received.

{success_text}
Analyze what the successful trajectories share: the overall approach,
the order of sub-goals, the tool usage patterns, the recovery
heuristics. Distinguish what is TRANSFERABLE (structure, sub-goal
ordering, tool patterns) from what is NOT (specific object names,
specific locations, specific observation text).

Output a concise strategy the agent should follow for the current
task. Include reasoning for each recommendation. Do NOT echo the
trajectories verbatim -- distill them into actionable guidance.
Output plain text only.
\end{lstlisting}
\end{synthprompt}

\noindent The placeholders \texttt{\{instruction\}} and \texttt{\{success\_text\}} contain the current task description and up to the top-$B$ retrieved successful trajectories, respectively, with trajectories concatenated as $\texttt{--- Success } i \texttt{ ---}$ blocks. The synthesis LLM uses temperature $0$ and a $1024$-token budget.

\Needspace{6\baselineskip}
\paragraph{Corrective synthesis prompt.}
The $\mathrm{corrective}$ mode prompts the synthesis LLM to extract avoidance rules from up to the top-$B$ retrieved failure trajectories, using a single successful trajectory as a contrast when available.
\begin{synthprompt}[title={Corrective synthesis prompt}]
\begin{lstlisting}[style=promptlisting]
You are synthesizing avoidance rules for an agent.

The agent is about to attempt this task:
  "{instruction}"

Below are trajectories from similar tasks that FAILED. Study them
carefully -- identify the failure modes, the decision points where
things went wrong, and the recurring mistakes:

{failure_text}

For contrast, here are a few SUCCESSFUL trajectories on similar
tasks. Use them sparingly -- only to show the correct move at the
points where failures went wrong:

{success_text}

Output concise avoidance rules ("do NOT do X because Y") and, where
helpful, the corrective action the successful trajectories took at
the same decision point. Include reasoning. Output plain text only.
\end{lstlisting}
\end{synthprompt}

\noindent The placeholders \texttt{\{instruction\}} and \texttt{\{failure\_text\}} contain the current task description and up to the top-$B$ retrieved failures, respectively. The placeholder \texttt{\{success\_text\}} contains the top-1 successful trajectory when available and otherwise takes the value \texttt{(no successful reference available)}, allowing synthesis to proceed without a positive reference. Decoding settings match those of the strategic prompt.

\paragraph{Imitation synthesis interface.}
Given the current task and up to the top-$B$ retrieved successful trajectories, $\mathrm{imitation}$ identifies successful action patterns, their supporting observations, and ways to adapt them to the current task without generalizing instance-specific object names or locations into rules. The synthesized guidance is presented through the abstraction wrapper below rather than as a verbatim copy of a reference trajectory.

\paragraph{Direct factual-context wrapper ($\mathrm{none}$).}
The $\mathrm{none}$ mode supplies retrieved records and their observed outcomes without a synthesis call, using the following factual-context header:

\begin{wrapprompt}[title={Direct factual context},breakable=false]
\begin{lstlisting}[style=promptlisting]
## Retrieved Factual Context (none)
Below are records of related past task executions and their outcomes.
Use them as evidence; adapt to the actual observations of this task.

<retrieved trajectory records and observed outcomes>
\end{lstlisting}
\end{wrapprompt}

\noindent Factual records retain their task instructions, observed outcomes, and action--observation sequences within the shared context budget, with formatting that adds no inferred behavioral rules and requires no synthesis LLM call.

\begin{samepage}
\paragraph{Injection wrapper.}
The output of $\mathrm{imitation}$, $\mathrm{strategic}$, or $\mathrm{corrective}$ receives a mode-specific markdown header before being appended to the agent's system prompt.

\begin{wrapprompt}[title={Guidance injection},breakable=false]
\begin{lstlisting}[style=promptlisting]
## <mode-specific header>
The following was synthesized from similar past trajectories for
this specific task. Treat it as guidance -- adapt to the actual
observations you receive.

<synthesized abstraction>
\end{lstlisting}
\end{wrapprompt}
\end{samepage}

\begin{sloppypar}
\noindent The headers \texttt{Behavioral Patterns (imitation)}, \texttt{Task-Specific Strategy (strategic reflection)}, and \texttt{Avoidance Rules (corrective reflection)} identify the source mode alongside each synthesized abstraction. Under $\mathrm{none}$, the agent receives the factual-context wrapper above, which is omitted only when no records are available.
\end{sloppypar}

\subsection{Retrieval Similarity}
\label{app:method:similarity}

Both retrieval pathways operate on a fixed task embedding $\mathbf{e}_\tau \in \mathbb{R}^{d}$ using the cosine similarity function
\[
\mathrm{sim}(\tau_i, \tau) = \frac{\langle \mathbf{e}_{\tau_i}, \mathbf{e}_\tau \rangle}{\lVert \mathbf{e}_{\tau_i}\rVert \cdot \lVert \mathbf{e}_\tau\rVert + \epsilon}, \quad \epsilon = 10^{-12},
\]
but serve distinct purposes: \emph{trajectory retrieval} selects records for direct use or synthesis, whereas \emph{mode-selection retrieval} selects outcome-log neighbors for estimating local bandit rewards. These purposes lead to differences in candidate sets, time decay, and task-type filtering.

\paragraph{Task embedding.}
For each task $\tau$, we compute one embedding by concatenating the task instruction with a truncated prefix of the initial observation:
\[
\texttt{embed\_text}
= \texttt{instruction}
\;\Vert\;
\texttt{"Initial state:"}
\;\Vert\;
\texttt{truncate}(\texttt{obs}, 500).
\]
Including the initial observation helps distinguish tasks that share a goal template but differ in their starting state. Qwen3-Embedding-8B~\citep{zhang2025qwen3embedding} computes this embedding in a single API call; if the call fails, $\mathbf{e}_\tau$ remains unset and the controller defaults to $\mu = \mathrm{none}$ for the current task without an in-iteration retry.

\paragraph{Trajectory retrieval (input to $\phi$).}
Let $R_\mu$ denote retrieval from trajectory memory under the filtering rules and budgets for mode $\mu$. Under $\mathrm{none}$, $R(\tau,M_k,B)$ returns up to $B$ relevant factual records and their outcomes without synthesis or success-only filtering. The $\mathrm{imitation}$ and $\mathrm{strategic}$ modes select up to the top-$B$ successes ($y=1$), whereas $\mathrm{corrective}$ selects up to the top-$B$ failures ($y=0$) and one contrasting success when available. Each candidate set is ranked by task similarity after task-type filtering, with a default budget of $B=5$ (\cref{app:method:hyperparams}).

\paragraph{Mode-selection retrieval (input to the bandit).}
The contextual bandit in \cref{sec:method} searches the entire outcome log $\mathcal{L}_k$ using the same task embedding and cosine similarity, computing $\mathrm{sim}(\tau_i, \tau)$ for every entry in a single matrix--vector product. The top-$\hat B$ entries, or all entries if fewer are available, form the local neighborhood $\mathcal{N}_{\hat B}(\tau)$ used for the cost-adjusted, recency-weighted reward estimates in \cref{sec:method}. If average neighborhood similarity falls below $\rho$, the bandit falls back to a global-best policy that ignores the neighborhood; both $\rho$ and $\hat B$ are reported in \cref{app:method:hyperparams}.

\subsection{Hyperparameters}
\label{app:method:hyperparams}

\Cref{tab:hyperparams} lists the method-level hyperparameters and default settings for anti-persistence, with sensitivity analyses varying only the specified parameters. Symbols follow the main text, supplemented by parameters governing bandit exploration, while \cref{app:setup-details} reports run-level settings such as iteration counts, episodes per iteration, and per-step budgets.

\begin{table}[h]
\centering
\caption{Method hyperparameters and default values. Sensitivity analyses vary the specified parameters.}
\label{tab:hyperparams}
\small
\setlength{\tabcolsep}{4pt}
\renewcommand{\arraystretch}{1.15}
\begin{tabularx}{\linewidth}{@{}p{0.27\linewidth}p{0.21\linewidth}Y@{}}
\toprule
Parameter (symbol) & Default & Description \\
\midrule
\multicolumn{3}{l}{\emph{Bandit: mode selection}} \\
KNN neighborhood size $\hat{B}$         & $10$        & top-$\hat{B}$ entries form $\mathcal{N}_{\hat{B}}(\tau)$ \\
UCB coefficient $c$                     & $0.5$       & exploration bonus base \\
Recency decay $\gamma$                  & $0.998$     & weight $w_i \propto \gamma^{\,t_{\max}-t_i}$ \\
KNN trust threshold $\rho$              & $0.7$       & fallback to global-best if $\overline{\mathrm{sim}} < \rho$ \\
Per-mode warmup $W_{\mathrm{init}}$                     & $3$         & each mode forced at least $W_{\mathrm{init}}$ times before KNN-UCB \\
Round-robin warmup iterations           & $3$         & first $3$ iterations enforce round-robin selection \\
$\varepsilon$-greedy rate $\varepsilon$ & $0.1$       & probability of uniform-random mode selection \\
Reward clipping range                   & $[-1, 1]$   & per-task bandit reward clipped to this range \\
\midrule
\multicolumn{3}{l}{\emph{Reward shaping}} \\
Cost weight $\lambda$                   & $0.05$      & $r=\operatorname{clip}(y-\lambda\kappa,-1,1)$ \\
\midrule
\multicolumn{3}{l}{\emph{Trajectory retrieval (input to $\phi$)}} \\
Success top-$k$ ($B$)                   & $5$         & success trajectories retrieved per call \\
Failure top-$k$ ($B$)                   & $5$         & failure trajectories (corrective mode only) \\
Contrasting success top-$k$             & $1$         & success reference for corrective mode \\
Task-type filter                        & on          & retrieval restricted to the current task type \\
\midrule
\multicolumn{3}{l}{\emph{LLM calls}} \\
Synthesis max tokens                    & $1024$      & for $\phi$ in imitation, strategic, and corrective \\
Agent max tokens                        & $1024$      & per action / step \\
\midrule
\multicolumn{3}{l}{\emph{Embedding}} \\
Embedding model                         & Qwen3-Embedding-8B & one call per task; see \cref{app:method:similarity} \\
Init-observation prefix length          & $500$ chars & characters of $\texttt{obs}$ concatenated into \texttt{embed\_text} \\
\bottomrule
\end{tabularx}
\end{table}

\noindent The following notes explain the design choices underlying the numerical defaults.

\noindent\textit{Round-robin and $\varepsilon$-greedy as safeguards.}
The first three experimental iterations use round-robin selection to provide initial mode coverage, subject to the logging exclusions in \cref{app:method:outcome-log}. After warmup, an $\varepsilon$-greedy rate of $0.1$ maintains exploration by giving every mode a nonzero chance of selection, complementing the initial coverage provided by round-robin selection.

\noindent\textit{KNN trust threshold $\rho = 0.7$.}
When the average cosine similarity within $\mathcal{N}_{\hat{B}}(\tau)$ falls below $\rho$, the bandit switches to its global-best fallback, reducing reliance on weak local matches without guaranteeing accurate global estimates. We set $\rho$ to $0.7$ using a small validation set and perform no further tuning.

\noindent\textit{Reward clipping.}
Per-task bandit rewards $r=\operatorname{clip}(y-\lambda\kappa,-1,1)$ are recorded in the outcome log $\mathcal{L}_k$ using the realized synthesis-call count $\kappa$, which is zero when synthesis is skipped. Although the defaults ($y \in \{0, 1\}$, $\lambda = 0.05$, $\kappa \in \{0, 1\}$) already produce rewards within this range, clipping accommodates potential changes in the reward or cost scale.

\subsection{Outcome-Log Schema and Update Protocol}
\label{app:method:outcome-log}

The outcome log $\mathcal{L}_k$ introduced in \cref{sec:method} records observed mode choices and outcomes without storing synthesized abstractions. Its schema and update protocol are specified below; because the recorded outcomes inform later mode selection, this factual log still permits feedback across tasks.

\paragraph{Record schema.}
Each entry of $\mathcal{L}_k$ is a tuple of seven fields:
\begin{itemize}[leftmargin=2em,itemsep=0pt]
  \item \texttt{embedding} (\texttt{list[float]}): the task embedding $\mathbf{e}_\tau$ from \cref{app:method:similarity}, used by the bandit's KNN lookup;
  \item \texttt{mode} (\texttt{str}): one of $\{\mathrm{none}, \mathrm{imitation}, \mathrm{strategic}, \mathrm{corrective}\}$;
  \item \texttt{reward} (\texttt{float}, clipped to $[-1, 1]$): the cost-adjusted bandit reward $r=\operatorname{clip}(y-\lambda\kappa,-1,1)$;
  \item \texttt{raw\_success} (\texttt{bool}): the binary task outcome $y$, independent of $\lambda$;
  \item \texttt{task\_type} (\texttt{str}): the task family (e.g., \texttt{alfworld});
  \item \texttt{task\_id} (\texttt{int}): the index of $\tau$ in its benchmark task pool;
  \item \texttt{instruction} (\texttt{str}, truncated to $\le 500$ characters): the task instruction text, retained for stratified analysis and debugging.
\end{itemize}

\paragraph{Storage format.}
$\mathcal{L}_k$ is stored as a single append-only JSONL file with one record per line, without rewriting, compaction, or deletion of entries. The example below displays one record with line breaks and abbreviated embedding values for readability:

\begin{wrapprompt}
\begin{lstlisting}[style=promptlisting]
{"embedding": [0.0123, -0.0456, ..., 0.0789],
 "mode": "strategic", "reward": 0.95, "raw_success": true,
 "task_type": "alfworld", "task_id": 7,
 "instruction": "put a clean mug on the coffee table"}
\end{lstlisting}
\end{wrapprompt}
\color{black}

\paragraph{Update protocol.}
After each training-task attempt, the agent appends one record to $\mathcal{L}_k$, subject to three skip rules:
\begin{enumerate}[leftmargin=2em,itemsep=2pt]
  \item \emph{The initial task update $k=0$ is excluded.} At $k = 0$, trajectory memory $M_k$ is empty and all modes reduce to baseline behavior. Logging this outcome would attribute baseline noise to the selected mode.
  \item \emph{Embedding failures are excluded.} If the embedding API call for $\tau$ fails, the record cannot enter KNN retrieval and is dropped.
  \item \emph{Infrastructure failures are excluded.} Records from abnormal terminations, such as environment exceptions, are skipped because their outcomes cannot be attributed to the selected mode.
\end{enumerate}
These rules exclude attempts with empty initial memory, missing embeddings, or infrastructure failures, but do not remove task-selection effects or other confounding from the remaining observations.

\paragraph{Read protocol.}
On startup, the JSONL file is streamed to reload $\mathcal{L}_k$ in file order, using the following conventions for timestamp assignment and corrupted records.

\noindent\textit{Relative timestamps.}
Logical timestamps $t_i$ follow log order without being stored in the file, and each appended entry receives a timestamp $t_{\mathrm{new}}$ later than those of existing entries. Reloading assigns consecutive integer ranks that preserve order while ignoring elapsed time and previous timestamp spacing. For a fixed loaded log, anchoring decay at the latest neighborhood timestamp $t_{\max}$ rather than the current update index rescales every weight by the same positive factor, leaving normalized reward estimates unchanged.

\noindent\textit{Tolerance to corrupted lines.}
Malformed lines, such as those caused by interrupted writes, are skipped with a logged warning while loading continues, preventing a single corrupted record from invalidating the log.

\section{Worked Example: A Walkthrough of On-Demand Synthesis}
\label{app:walkthrough}

This appendix presents a constructed ALFWorld example at task update $k=177$, after the three-iteration warmup, illustrating the supported local-selection branch in which the neighborhood passes the similarity threshold, every mode has positive weight, and no exploration override occurs. The records, weighted outcomes, and synthesized guidance illustrate the procedure rather than document a measured execution trace.

\begingroup
\fontsize{9}{12}\selectfont
\setlength{\parskip}{2pt}
\begin{walkcard}[breakable=false]{Incoming task\hfill ALFWorld, $k=177$}
\textbf{Incoming task} ($\tau$, illustrative identifier \texttt{example\_heat\_egg}): \emph{``Your task is to: heat an egg and put it on the diningtable.''} Initial state lists \texttt{countertop 1, microwave 1, diningtable 1, cabinet 1--3, drawer 1--2, fridge 1, \ldots}
\end{walkcard}

\walkstep{1}{Candidate records from $M_k$ ($B=5$)}
Five candidate records illustrate the available experience (\cref{app:method:similarity}), including three distinct failures on related ``heat-and-place'' tasks. This pool provides context for the example; mode-specific retrieval occurs after mode selection in Step 3.

\begin{walkcard}{Retrieved execution records}
\begin{description}[leftmargin=30pt,labelwidth=23pt,labelsep=7pt,itemsep=3pt,parsep=0pt,topsep=0pt,font=\normalfont\sffamily\bfseries\color{papernavy}]
\walkrecord{1}{(sim $0.92$, $y=0$, update $175$): ``heat a tomato and put it on the diningtable'' --- issued \texttt{heat tomato 1 with microwave 1} without first taking the tomato; the action had no effect because the object was not held.}

\walkrecord{2}{(sim $0.88$, $y=0$, update $176$): ``heat a potato and put it on the diningtable'' --- issued \texttt{move potato 1 to diningtable 1} before any heating step.}

\walkrecord{3}{(sim $0.86$, $y=0$, update $174$): ``heat a mug and put it on the countertop'' --- searched cabinets and drawers for the microwave instead of going directly; exceeded step budget.}

\walkrecord{4}{(sim $0.82$, $y=1$, update $173$): ``heat an apple and put it on the diningtable'' (contrasting success).}

\walkrecord{5}{(sim $0.75$, $y=1$, update $172$): ``heat a bread and put it on the diningtable.''}
\end{description}
\end{walkcard}

\walkstep{2}{Local outcome estimates ($\hat B=10$)}
This illustration uses recency-weighted similarity weights $w_i = \max(\mathrm{sim}(\tau_i, \tau), 0)\cdot \gamma^{t_{\max}-t_i}$ with $\gamma=0.9$ and cost weight $\lambda=0.20$. Experimental defaults are listed separately in \cref{app:method:hyperparams}. In this constructed neighborhood, direct-use records have $\kappa=0$ and every synthesis-mode record has one realized synthesis call ($\kappa=1$). The per-mode estimates are shown with the UCB scores below.

\Needspace{13\baselineskip}
\walkstep{3}{UCB mode selection}
The controller adds the exploration bonus $c\sqrt{\log|\mathcal{N}_{\hat{B}}|/n_\mu}$ with $c=0.5$. Bonuses are rounded to two decimals.

\begin{walkcard}[breakable=false]{Local estimates and selection scores}
\centering
\setlength{\tabcolsep}{5pt}
\renewcommand{\arraystretch}{1.1}
\begin{tabular}{@{}lcccc@{}}
\toprule
Mode & $n_\mu$ & \shortstack{Weighted\\mean $y$} & Cost-adjusted $\hat r$ & UCB score \\
\midrule
none & $2$ & $0.10$ & $0.10$ & $0.10+0.54=0.64$ \\
imitation & $4$ & $0.30$ & $0.30-0.20=0.10$ & $0.10+0.38=0.48$ \\
strategic & $2$ & $0.50$ & $0.50-0.20=0.30$ & $0.30+0.54=0.84$ \\
\rowcolor{metabg!7}
\textbf{corrective} & $2$ & $0.85$ & $0.85-0.20=0.65$ & $0.65+0.54=\mathbf{1.19}$ \\
\bottomrule
\end{tabular}
\end{walkcard}

The controller selects $\mu^\star = \mathrm{corrective}$ because it has the highest score, drawing on candidate records that provide three distinct failure modes for analysis. By comparison, $\mathrm{imitation}$ abstracts behavior from successful records without explicitly analyzing those failures.

\Needspace{10\baselineskip}
\walkstep{4}{On-demand abstraction $\sigma_\tau=\phi(H_\tau,\mu^\star)$}
The selected $\mathrm{corrective}$ mode uses the top-3 failures ($j=1,2,3$) and contrasting success ($j=4$). The synthesis operator $\phi$ then makes one LLM call ($\kappa(\mathrm{corrective})=1$) to extract avoidance rules:

\begin{walkcard}{Avoidance Rules (corrective reflection)}
\begin{samepage}
\textcolor{papernavy}{\textbf{Rule 1.}} \texttt{Do NOT heat <object> unless you are holding it.}

From $j=1$: the heating action failed because the object was not held. Corrective action: \texttt{take} the object and keep it in inventory until \texttt{heat}.
\end{samepage}

\walkdivider
\begin{samepage}
\textcolor{papernavy}{\textbf{Rule 2.}} \texttt{Do NOT place <object> on diningtable before heating.}

From $j=2$: placement alone leaves the heating requirement unmet; completing the goal would require additional actions. Order of operations: heat first, then transport, then place.
\end{samepage}

\walkdivider
\begin{samepage}
\textcolor{papernavy}{\textbf{Rule 3.}} \texttt{Do NOT search cabinets/drawers for microwave 1.}

From $j=3$: microwave 1 is listed in the initial observation; use \texttt{go to microwave 1} directly.
\end{samepage}

\walkdivider
\textbf{Contrast ($j=4$, success).} \texttt{take} $\to$ \texttt{go to microwave 1} $\to$ \texttt{heat with microwave 1} $\to$ \texttt{go to diningtable 1} $\to$ \texttt{move to diningtable 1}.
\end{walkcard}

\Needspace{15\baselineskip}
\walkstep{5}{Illustrative execution under $\sigma_\tau$}
Under ALFWorld's action semantics, \texttt{heat} operates on a held object. The following constructed trajectory $e_\tau^{(177)}$ applies all three rules; bracketed observations are abbreviated illustrations:

\begin{walkcard}[breakable=false]{Execution trace\hfill Illustrative, 6 steps}
\setlength{\tabcolsep}{0pt}
\renewcommand{\arraystretch}{1.1}
\begin{tabularx}{\linewidth}{@{}r@{\hspace{10pt}}p{0.56\linewidth}Y@{}}
\textbf{Step} & \textbf{Action} & \textbf{Observation / guidance} \\
\midrule
1 & \texttt{> go to countertop 1} & [egg 1, knife 1] \\
2 & \texttt{> take egg 1 from countertop 1} & Rule 1: hold before heating \\
3 & \texttt{> go to microwave 1} & Rule 3: go directly \\
4 & \texttt{> heat egg 1 with microwave 1} & [heated] \\
5 & \texttt{> go to diningtable 1} & Rule 2: place after heating \\
6 & \texttt{> move egg 1 to diningtable 1} & [\textbf{success}] \\
\end{tabularx}
\end{walkcard}

\textbf{Illustrative outcome:} $y_\tau^{(177)}=1$ in 6 steps.

\Needspace{13\baselineskip}
\walkstep{6}{Memory update and discard}
The trajectory record enters $M_k$, the mode and outcome enter $\mathcal L_k$, and the synthesized abstraction is discarded:

\begin{walkcard}[breakable=false]{Persistent records after execution}
\textbf{Trajectory memory}
\[
M_{178}\gets M_{177}\cup\{(\tau,e_\tau^{(177)},1)\}.
\]
\textbf{Outcome log}
\[
\mathcal L_{178}\gets\mathcal L_{177}\cup
\{(\tau,\mathrm{corrective},1,0.80,t_{\mathrm{new}})\},
\]
with $r=1-0.20$. The temporary guidance $\sigma_\tau$ is discarded and $S_k$ remains empty, preventing direct reuse of this abstraction as a stored memory object.
\end{walkcard}

\textbf{Reading the example.} Local outcome estimates and cost-adjusted UCB scores determine the selected mode, after which corrective synthesis analyzes three failure patterns in the constructed records: heating without holding the object, premature placement, and inefficient search. A contrasting success helps formulate avoidance rules. The alternative modes emphasize reusable behavior from successful executions, higher-level planning, or direct factual context without synthesis. Although the temporary abstraction $\sigma_\tau$ is stored in neither $\mathcal{L}_k$ nor $M_k$, its effects remain in the recorded trajectory and may inform later synthesis, while the outcome log influences subsequent mode selection.
\color{black}
\endgroup

\Needspace{8\baselineskip}
\section{Experimental Setup Details}
\label{app:setup-details}

This appendix details the task splits, training and evaluation protocol, backbone configurations, and AutoRefine implementation used in the experiments. All memory and controller updates use training experience; held-out outcomes remain separate from the update process.

\subsection{Per-Benchmark Task Counts and Partitioning}
\label{app:setup-details:tasks}

For each benchmark, a fixed \emph{training set} supplies interaction experience for memory and controller-state updates with backbone parameters held fixed, while a separate fixed \emph{held-out set} is reserved for evaluation. Where available, we adopt community-standard splits from prior work~\citep{shinn2023reflexion, zhao2024expel, chen2024automanual, fang2025memp}, with task counts reported in \cref{tab:task-counts} and the per-iteration sampling procedure described below.

\begin{table}[h]
\centering
\caption{Per-benchmark training and held-out task counts used in this study.}
\label{tab:task-counts}
\small
\renewcommand{\arraystretch}{1.15}
\begin{tabularx}{\linewidth}{lccY}
\toprule
Benchmark & Training tasks & Held-out tasks & Evaluation set \\
\midrule
ALFWorld     & $180$ & $134$ & $134$ unseen tasks~\citep{shinn2023reflexion, zhao2024expel} \\
WebShop      & $200$ & $100$ & $100$ held-out instructions \\
ScienceWorld & $150$ & $60$  & $2$ held-out instances per task type \\
\bottomrule
\end{tabularx}
\end{table}

\paragraph{ALFWorld.}
ALFWorld~\citep{shridhar2020alfworld} provides $3{,}553$ training games and $134$ unseen evaluation games across six categories: \textit{Pick \& Place}, \textit{Examine in Light}, \textit{Clean \& Place}, \textit{Heat \& Place}, \textit{Cool \& Place}, and \textit{Pick Two \& Place}. We adopt the $134$-task held-out split used in prior work~\citep{shinn2023reflexion, zhao2024expel} and draw $30$ tasks per category to obtain a balanced training set of $180$ tasks. Although AutoManual~\citep{chen2024automanual} uses smaller per-category splits, our setup follows the larger Reflexion/ExpeL evaluation protocol while keeping the training set manageable.

\paragraph{WebShop.}
WebShop~\citep{yao2022webshop} provides $12{,}087$ crowd-sourced shopping instructions paired with target product descriptions and structured attributes. We use a fixed $100$-instruction held-out set. This evaluation size differs from the original benchmark's $500$-instruction test split~\citep{yao2022webshop}. We uniformly sample $200$ training instructions from the remaining $11{,}987$ using a fixed random seed to ensure a common training stream across methods.

\paragraph{ScienceWorld.}
ScienceWorld~\citep{wang2022scienceworld} provides $30$ task types in elementary science, including thermodynamics, electricity, and biology, with multiple parameterized variations of each type. We construct a stratified partition by randomly selecting $N_{\mathrm{train}} = 5$ training instances and $N_{\mathrm{held}} = 2$ nonoverlapping held-out instances for each of the $30$ types. The resulting $150$ training tasks and $60$ held-out tasks cover every task type, enabling evaluation on unseen variations within each type. Sampling once with a fixed random seed gives all methods the same partition, supporting controlled comparisons within this paper despite the difference from prior protocols.

\paragraph{Per-iteration training subset.}
At each experimental iteration $\ell$, we sample $N_{\mathrm{train/iter}} = 50$ training tasks uniformly without replacement and attempt them sequentially in an order determined by a fixed random seed. These episodes alone may update each method's memory and related state according to its update procedure, yielding $500$ training episodes per benchmark across $K = 10$ iterations. All methods receive the same sample at iteration $\ell$, controlling for training-task composition in comparisons of memory designs.

After each iteration's training updates, each method is evaluated on the full held-out set with memory and related state frozen. Held-out episodes use a separate log that is never carried forward and cannot update trajectory or abstraction memory, $\mathcal{L}_k$, retrieval metadata, maintenance scores, the skill evolver, or pruning. Each episode permits up to $50$ environment steps, with attempts exceeding this budget terminated as failures ($y = 0$).

\subsection{Backbone Configurations}
\label{app:setup-details:backbones}

We evaluate three open-weight backbones from distinct model families---Llama-4-Scout, Qwen3-Coder-Next, and Gemma-4-31B-it---running locally through OpenAI-compatible inference endpoints with the hardware and context lengths in \cref{tab:backbones}. All methods share the same agent prompt and per-step output budget to control these factors when comparing memory configurations.

\begin{table}[h]
\centering
\caption{Backbone configurations.}
\label{tab:backbones}
\small
\renewcommand{\arraystretch}{1.15}
\begin{tabular}{lll}
\toprule
Model & Inference hardware & Context length \\
\midrule
Llama-4-Scout~\citep{meta2025llama4}              & $2 \times$ NVIDIA H100 & 64K  \\
Qwen3-Coder-Next~\citep{cao2026qwen3}             & $4 \times$ NVIDIA H100 & 256K \\
Gemma-4-31B-it~\citep{deepmind2026gemma4}         & $4 \times$ NVIDIA L40S & 256K \\
\bottomrule
\end{tabular}
\end{table}

\begin{sloppypar}
\paragraph{Embedding model.}
Qwen3-Embedding-8B~\citep{zhang2025qwen3embedding} produces task embeddings for trajectory retrieval (\cref{app:method:similarity}) and KNN lookup through an OpenAI-compatible embeddings endpoint on the same cluster. Anti-persistence makes one embedding call per task using the same encoder with every backbone, ensuring a common embedding representation for outcome-log similarity comparisons across configurations.
\end{sloppypar}

\subsection{AutoRefine Baseline Implementation}
\label{app:setup-details:autorefine}

Our AutoRefine baseline adapts the maintenance procedure of \citet{qiu2026autorefine}, using the same retrieval and per-iteration episode budgets as the other methods (\cref{subsec:setup}). The five components below include utility scoring, retrieval, pruning, synthesis, and the update loop. This baseline therefore differs from abstraction memory in more than maintenance alone. Its results are reported in \cref{tab:head-to-head} and \cref{fig:maintenance}.

\paragraph{(i) Per-skill utility score.}
For each stored skill $s$, the metadata triple $(n^{\mathrm{inj}}_s,\, n^{\mathrm{succ}}_s,\, t^{\mathrm{create}}_s)$ records its prompt-inclusion count, the number of successful episodes among those inclusions, and its creation iteration. We define skill utility as
\begin{equation}
\mathrm{score}(s) \;=\; \underbrace{\frac{n^{\mathrm{succ}}_s}{n^{\mathrm{inj}}_s + \varepsilon}}_{\text{effectiveness}} \;\cdot\; \underbrace{\log\!\big(1 + n^{\mathrm{inj}}_s\big)}_{\text{frequency}},
\end{equation}
with $\varepsilon = 10^{-3}$, so the frequency factor discounts skills supported by only one or two retrievals, while skills that have never appeared in a successful episode receive zero. This metadata is stored in a JSON file alongside the skill files.

\paragraph{(ii) Multi-query retrieval with similarity floor and MMR diversification.}
For each episode, the backbone rewrites the task description into three queries emphasizing \emph{strategy}, \emph{pitfall}, and \emph{actions}, which join the original description to form four query strings. Each query is embedded with Qwen3-Embedding-8B (\cref{app:method:similarity}), and a candidate skill's relevance is its maximum cosine similarity across the queries. After discarding candidates below the similarity threshold of $0.30$, we re-rank the remainder using Maximal Marginal Relevance with diversity weight $\lambda_{\mathrm{MMR}} = 0.7$, selecting at each step the skill that maximizes
\begin{equation}
\lambda_{\mathrm{MMR}}\,\mathrm{rel}(s) \;-\; (1 - \lambda_{\mathrm{MMR}})\,\max_{s' \in \mathcal{S}_{\mathrm{sel}}}\,\mathrm{sim}(s, s'),
\end{equation}
and concatenating the top $5$ entries into the prompt. Candidates are drawn from the general-skill set, the common-mistakes set, and the benchmark's task-type-specific skill set.

\paragraph{(iii) Periodic percentile pruning.}
Every two iterations (at $\ell = 2, 4, \ldots, K$), we prune the bottom $20\%$ of eligible skills by utility score, with eligibility restricted to \emph{$n^{\mathrm{inj}}_s \ge 3$} to protect skills with little retrieval history. A single transaction removes the selected entries from the in-memory dictionary, on-disk skill files, and metadata file while invalidating their embedding caches.

\paragraph{(iv) Contrastive skill evolver.}
After each iteration, a single contrastive prompt extracts new skills from at most four failure episodes and four sampled success episodes, using a benchmark-specific system preamble for ALFWorld, WebShop, or ScienceWorld. Each generated skill must include a \texttt{Why it works} section that contrasts the success and failure evidence. The synthesizer's output is decoded as a JSON array through a four-stage fallback parser comprising direct parsing, fenced-block stripping, extraction of the last bracketed array, and per-object scanning; skills with non-slug names are renamed \texttt{dyn-$N$}, using the smallest $N$ above the maximum existing index. This single-call evolver replaces the original AutoRefine release's two-call procedure for reflecting on failures and extracting experience from successes, placing both forms of evidence in the shared context required by the contrastive prompt.

\paragraph{(v) Per-iteration update loop.}
Each AutoRefine iteration $\ell$ runs the following loop:
\begin{enumerate}[leftmargin=2em,itemsep=2pt]
    \item Reset the training retrieval log for the current iteration.
    \item Run the training episodes for iteration $\ell$, invoking the multi-query retrieval procedure in step (ii) for each episode.
    \item Update metadata from the training retrieval log. For each skill included in an episode's prompt, set $n^{\mathrm{inj}}_s \mathrel{+}= 1$ and, if that episode succeeded, $n^{\mathrm{succ}}_s \mathrel{+}= 1$.
    \item Run the contrastive evolver in step (iv) using only training outcomes. Add the new skills to the store and set $t^{\mathrm{create}}_s = \ell$.
    \item If $\ell > 0$ and $\ell$ is even, run percentile pruning (step (iii)) and write a metadata checkpoint.
    \item Freeze the resulting state and evaluate on the full held-out set. Held-out episodes may retrieve skills, but a separate log records their retrievals and outcomes. These records do not update skill files, metadata, the evolver, or pruning.
\end{enumerate}
The training retrieval log identifies each injection by the tuple (task type, task description, initial observation), and Step (3) credits a successful retrieval only when the skill appeared in that episode's prompt. Successes achieved without retrieving the skill therefore do not increase its utility score.

\section{Extended Cross-Backbone Results}
\label{app:full-results}

The main text reports per-iteration performance on Llama-4-Scout and Qwen3-Coder-Next (\cref{fig:divergence-pair}) and aggregate per-task LLM-call cost (\cref{fig:cost-summary}). This appendix extends that evaluation with success-rate and trajectory-length curves for Gemma-4-31B-it (\cref{app:full-results:gemma}), a cost breakdown for every $(\text{backbone} \times \text{benchmark})$ setting (\cref{app:full-results:cost}), and an analysis of whether descendants with flagged parents are more likely to be flagged themselves (\cref{app:full-results:descendant}). The latter describes historical associations relevant to the proposed error-reproduction pathway.

\subsection{Per-Iteration Curves on Gemma-4-31B-it}
\label{app:full-results:gemma}

\Cref{fig:gemma-pair} reports held-out success rates (left) and mean trajectory lengths (right) across memory-update iterations for Gemma-4-31B-it on all three benchmarks. These results extend the comparison for Llama-4-Scout and Qwen3-Coder-Next in \cref{fig:divergence-pair}.

\textbf{Success rate.}
Among the four memory configurations, anti-persistence achieved the highest final mean success on ALFWorld ($97.3\%$) and WebShop ($53.3\%$). Joint memory performed best on ScienceWorld ($56.3\%$, versus $54.1\%$ for anti-persistence). These final values match \cref{tab:head-to-head}, while the curves show that the relative ordering varied across earlier rounds.

\textbf{Trajectory length.}
Joint memory produced the shortest trajectories on ALFWorld, while trajectory memory did so on WebShop, followed closely by anti-persistence. Trajectory memory also produced the shortest trajectories on ScienceWorld, where anti-persistence and joint memory had comparable lengths. Within each benchmark, the success-rate and trajectory-length panels cover the same iteration range for every method.
\begin{figure}[!htbp]
\centering
\captionsetup{skip=5pt}
\captionsetup[subfigure]{skip=5pt}
\begin{subfigure}[t]{0.49\linewidth}
  \centering
  \includegraphics[width=\linewidth]{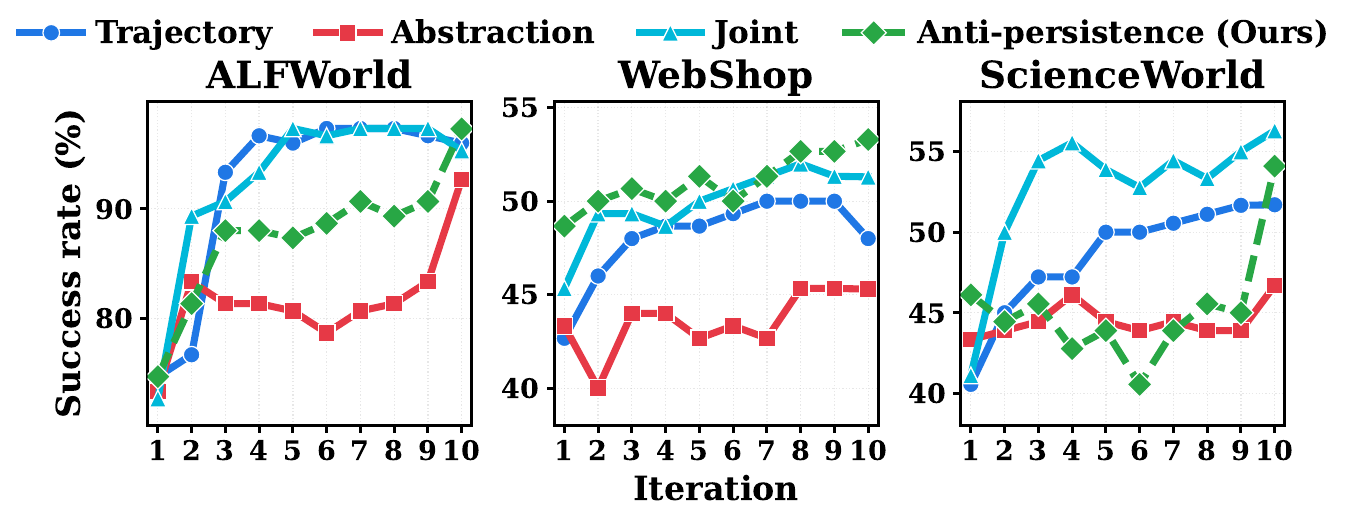}
  \caption{Per-iteration held-out success rate.}
  \label{fig:divergence-gemma}
\end{subfigure}
\hfill
\begin{subfigure}[t]{0.49\linewidth}
  \centering
  \includegraphics[width=\linewidth]{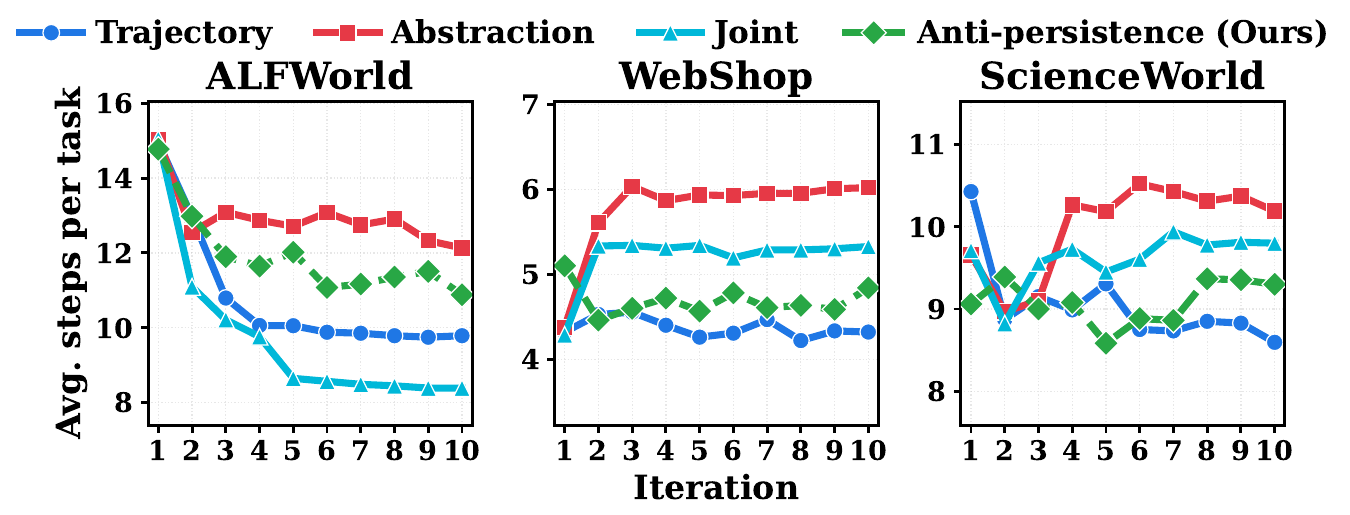}
  \caption{Per-iteration mean trajectory length.}
  \label{fig:avg-steps-gemma}
\end{subfigure}
\caption{Per-iteration performance on Gemma-4-31B-it across the three benchmarks.}
\label{fig:gemma-pair}
\end{figure}

\subsection{Per-Task LLM-Call Cost}
\label{app:full-results:cost}

\Cref{tab:cost-per-task} breaks down the aggregate cost in \cref{fig:cost-summary} by reporting average per-task LLM-call counts for all four memory configurations across the nine $(\text{backbone} \times \text{benchmark})$ settings. The metric includes all calls during task execution, including the additional synthesis calls under anti-persistence (\cref{app:method:modes}).

\begin{table}[t]
\centering
\caption{Average LLM-call cost per task after the final memory-update iteration (mean $\pm$ standard deviation over $3$ seeds). Lower is better.}
\label{tab:cost-per-task}
\fontsize{8.5}{10.5}\selectfont
\setlength{\tabcolsep}{3pt}
\renewcommand{\arraystretch}{1.2}
\begin{tabular*}{\linewidth}{@{\extracolsep{\fill}}lccccccccc@{}}
\toprule
 & \multicolumn{3}{c}{Llama-4-Scout} & \multicolumn{3}{c}{Qwen3-Coder-Next} & \multicolumn{3}{c}{Gemma-4-31B-it} \\
\cmidrule(lr){2-4} \cmidrule(lr){5-7} \cmidrule(lr){8-10}
Method & ALF & Web & Sci & ALF & Web & Sci & ALF & Web & Sci \\
\midrule
Traj. Mem.   & 17.87$\pm$0.40 & 4.21$\pm$0.09 & 5.25$\pm$0.10 & 10.90$\pm$0.15 & 6.12$\pm$0.03 & 11.41$\pm$0.32 & 11.87$\pm$0.19 & 4.85$\pm$0.06 & 13.00$\pm$0.05 \\
Abs. Mem.    & 16.74$\pm$4.77 & 4.53$\pm$0.27 & 1.33$\pm$1.15 & 13.64$\pm$0.03 & 6.91$\pm$0.10 & 15.08$\pm$1.05 & 13.46$\pm$1.04 & 6.04$\pm$0.17 & 13.80$\pm$0.59 \\
Joint Mem.   & 19.11$\pm$0.48 & 3.23$\pm$1.57 & 1.15$\pm$0.86 & 10.95$\pm$0.33 & 7.21$\pm$0.10 & 13.93$\pm$0.95 & 11.01$\pm$0.50 & 5.96$\pm$0.17 & 12.78$\pm$0.71 \\
\textbf{Anti-pers.} & 15.13$\pm$0.19 & 3.99$\pm$0.05 & 3.56$\pm$0.22 & 11.84$\pm$0.28 & 6.64$\pm$0.55 & 12.68$\pm$0.40 & 11.20$\pm$0.38 & 5.30$\pm$0.19 & 13.41$\pm$0.19 \\
\bottomrule
\end{tabular*}
\end{table}

\subsection{Provenance Associations and Operational Bias Flags}
\label{app:full-results:descendant}

We describe an audit of parent--descendant associations relevant to the pathway in \cref{app:dynamics}, combining historical exposure records with outcome-based flags assigned after evaluation. The equations below specify the intended quantities and their data requirements, while \cref{fig:descendant} retains the supplied aggregates, which have not been recomputed for this revision or verified against event-level provenance and labeling-eligibility logs. The audit therefore remains descriptive and cannot establish a causal reproduction effect or a long-run growth coefficient.

\paragraph{Descendant tracking.}
For a candidate $s=\phi(H)$, its historical parent set is
\[
P(s)=\bigcup_{m\in H}P(m),
\]
where $P(m)$ records the persistent abstractions present in the context when execution record $m$ was generated, retaining their historical identifiers even after the corresponding objects leave the current store $S_k$. Parent sets are recorded as analysis metadata when a candidate is created, without retroactively modifying the agent's memory. They document exposure and data lineage but do not establish that an abstraction caused a particular action or error.

\paragraph{Operational labeling.}
For each historically stored abstraction, the audit compares held-out episodes that retrieved it with episodes that did not. Let $n_s^+,n_s^-$ and $\bar y_s^+,\bar y_s^-$ denote the respective episode counts and mean success rates. Define the set of abstractions eligible for labeling as
\[
\widehat{\mathcal S}_{\mathrm{lab}}
=\{s:n_s^+\ge n_{\min},\ n_s^->0\}.
\]
For $s$ in this set, assign
\[
\widehat b(s)=\mathbf 1\{\bar y_s^- - \bar y_s^+\ge\delta\},
\qquad n_{\min}=3,\quad\delta=0.05,
\]
and let $\widehat{\mathcal B}=\{s\in\widehat{\mathcal S}_{\mathrm{lab}}:\widehat b(s)=1\}$. Abstractions outside the eligible set remain unclassified throughout the audit, and both comparison groups must exist to avoid assigning labels from undefined means. These unadjusted operational flags instantiate \cref{def:biased} without estimating causal harm, since retrieval may, for example, concentrate certain abstractions on more difficult tasks.

The audit uses held-out logs collected with memory and related state frozen, with labels and provenance analyses computed after the runs so that they cannot influence memory updates, maintenance scores, pruning, or subsequent behavior. Comparisons between the groups below require a common evaluation protocol and common audit thresholds.

\paragraph{Empirical conditional rates.}
Let $\mathcal V$ contain historically stored descendants that satisfy the labeling-eligibility conditions above and have a recorded historical parent set. Define $F(s)=\mathbf 1\{P(s)\cap\widehat{\mathcal B}\ne\emptyset\}$. The intended descriptive group rates are
\[
\widehat\alpha_{\mathrm{emp}}
=\frac{\sum_{s\in\mathcal V}F(s)\widehat b(s)}
{\sum_{s\in\mathcal V}F(s)},
\qquad
\widehat\alpha_0
=\frac{\sum_{s\in\mathcal V}(1-F(s))\widehat b(s)}
{\sum_{s\in\mathcal V}(1-F(s))},
\]
provided the corresponding denominator is positive, with a zero denominator leaving the rate undefined. The first group contains descendants with at least one flagged parent, whereas the second contains those with none; because the latter may include unclassified parents or an empty parent set, it does not constitute a group with verified benign ancestry.

These pooled observational rates describe conditional associations across the logged states rather than estimating $\alpha$ for a single fixed post-maintenance state. They also condition on prior storage and labeling eligibility, so they cannot separately identify candidate-level generation and admission rates. Comparing against $\widehat\alpha_0$ leaves task difficulty, retrieval selection, parent-set size, and other potential confounders uncontrolled, while shared ancestors and evaluation episodes introduce dependence among observations.

\paragraph{Results and scope.}
Descendants with at least one flagged parent had a higher flag rate in most of the six logged settings (\cref{fig:descendant}), with the largest reported gap, $\widehat\alpha_{\mathrm{emp}}-\widehat\alpha_0=+0.55$, occurring on Llama-4-Scout with ALFWorld. This association is consistent with the proposed pathway but identifies neither inherited error content nor a causal effect of parent exposure. The absence of group counts, labeling coverage, and an uncertainty-estimation procedure limits statistical interpretation of the aggregates, which cannot establish a causal amplification factor, a retention probability, or a threshold for long-run instability.

\begin{figure}[!htbp]
\centering
\captionsetup{skip=2pt}
\captionsetup[subfigure]{skip=2pt}
\begin{minipage}[c]{0.46\linewidth}
  \centering
  \includegraphics[width=\linewidth]{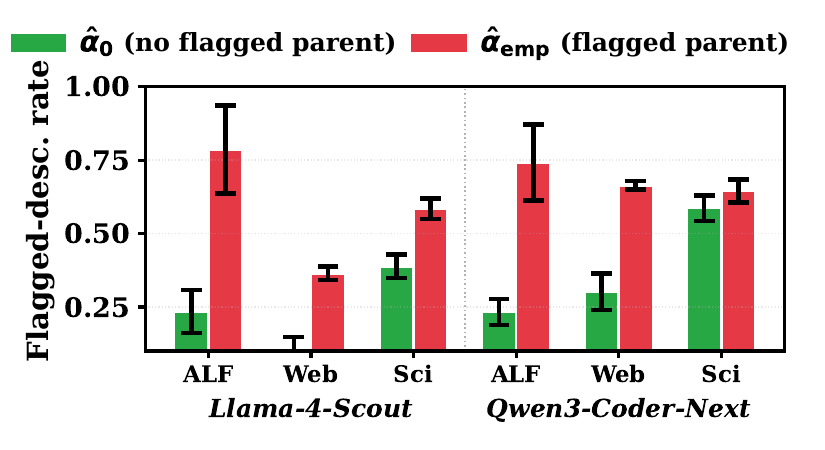}
  \caption{\small Observational flagged-descendant rates across six settings. Groups have at least one flagged parent or no flagged parent; absence of a flag does not establish benign ancestry. The largest reported rate gap is $+0.55$. Error bars are retained as reported, but their estimation procedure is unavailable with the aggregate data.}
  \label{fig:descendant}
\end{minipage}\hfill
\nextfloat
\begin{minipage}[c]{0.52\linewidth}
  \centering
  \begin{subfigure}[t]{0.48\linewidth}
    \centering
    \includegraphics[width=\linewidth]{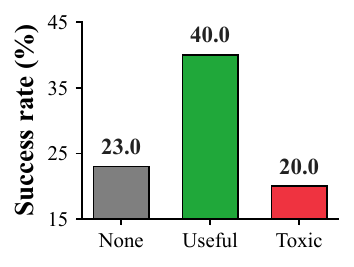}
    \caption{\small AppWorld ($n=30$).}
    \label{fig:additional-controlled-a}
  \end{subfigure}\hfill
  \begin{subfigure}[t]{0.48\linewidth}
    \centering
    \includegraphics[width=\linewidth]{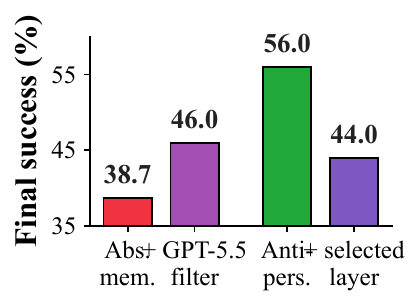}
    \caption{\small WebShop ($K=10$).}
    \label{fig:additional-controlled-b}
  \end{subfigure}
  \caption{\small Controlled abstraction-content and retention analyses; y-axes are truncated.}
  \label{fig:additional-controlled}
\end{minipage}
\end{figure}
\vspace{-6pt}

\section{Additional Controlled and Statistical Analyses}
\label{app:additional-analyses}

Three additional analyses examine the immediate effect of supplied skill content (\cref{fig:additional-controlled-a}), selective retention of filtered skills (\cref{fig:additional-controlled-b}), and consistency of performance differences across the nine main evaluation settings (\cref{tab:setting-level-stats}). The two controlled analyses report aggregate success rates, with task-level and seed-level observations unavailable for estimating uncertainty.

\subsection{Effects of Skill Content on AppWorld}
\label{app:additional-analyses:intervention}

We tested the immediate effect of skill content on 30 randomly selected AppWorld tasks~\citep{trivedi2024appworld}, using matched pairs of reusable skills constructed by GPT-5.5 before evaluation. Each pair shared the same intended scope but contained a useful version with a correct procedural rule and a toxic version with an incorrect rule; these labels denote constructed conditions rather than the outcome-based flags in \cref{app:full-results:descendant}. With the skill libraries frozen, Qwen3-Coder-Next attempted the same tasks under three conditions: no supplied skills, useful skills, and toxic skills.

With the acting model, task set, and intended skill scope held fixed, aggregate success was $23.0\%$ without supplied skills, $40.0\%$ with useful skills, and $20.0\%$ with toxic skills (\cref{fig:additional-controlled-a}). These results characterize the immediate effects of supplied abstraction content on performance without establishing whether the resulting trajectories generate biased descendants or sustain error reproduction across iterations.

\subsection{Selective Skill Retention on WebShop}
\label{app:additional-analyses:retention}

We examined selective skill retention on WebShop with Qwen3-Coder-Next under two configurations: GPT-5.5 assessed each newly generated skill before admission to persistent abstraction memory in the first, while the selected skills were added to anti-persistence as a frozen persistent-memory layer in the second.

Within this auxiliary evaluation, filtering increased the final success rate of abstraction memory from $38.7\%$ to $46.0\%$, whereas adding the frozen layer to anti-persistence reduced its final success rate from $56.0\%$ to $44.0\%$ (\cref{fig:additional-controlled-b}).

The tested filter and retention protocol thus improved the abstraction-memory baseline but reduced performance when the filtered skills were added to anti-persistence. Because the added layer remained frozen, its performance effect does not establish descendant generation or error reproduction across iterations.

\subsection{Setting-Level Consistency Across the Main Evaluation Grid}
\label{app:additional-analyses:stats}

To assess consistency across the nine fixed $(\text{backbone}\times\text{benchmark})$ settings in \cref{tab:head-to-head}, we computed the difference between anti-persistence and each baseline within every setting. We then summarized these paired differences by their equally weighted mean in percentage points and the number of settings with a positive difference.

For an exploratory statistical summary, we computed nominal 95\% $t$-intervals, two-sided paired $t$-tests, and exact two-sided Wilcoxon signed-rank tests, applying Holm correction separately to the seven baseline comparisons within each test family while leaving the intervals unadjusted. All calculations used the one-decimal cell means in \cref{tab:head-to-head}, with NumPy 2.5.0 and SciPy 1.18.1.

\begin{table}[t]
\centering
\caption{\small Exploratory comparisons across the nine fixed settings in \cref{tab:head-to-head}. $\Delta$ denotes anti-persistence minus the named baseline, measured in percentage points. CIs are nominal, unadjusted 95\% $t$-intervals, and all tests are two-sided. Holm correction covers the seven baseline comparisons within each test family. Shared backbones and benchmarks limit the inferential interpretation of these statistics.}
\label{tab:setting-level-stats}
\small
\setlength{\tabcolsep}{3.5pt}
\renewcommand{\arraystretch}{1.3}
\begin{tabular*}{\linewidth}{@{\extracolsep{\fill}}lccccccc@{}}
\toprule
Baseline & Mean $\Delta$ & 95\% CI & \multicolumn{2}{c}{Paired $t$-test} & \multicolumn{2}{c}{Wilcoxon signed-rank} & Positive \\
\cmidrule(lr){4-5}\cmidrule(lr){6-7}
 & (pp) & & Raw $p$ & Holm $p$ & Raw $p$ & Holm $p$ & cells \\
\midrule
ReAct               & $+19.2$ & $[14.1,24.4]$ & $<0.0001$ & $0.0002$ & $0.0039$ & $0.0273$ & $9/9$ \\
Reflexion           & $+7.0$ & $[2.7,11.4]$  & $0.0060$ & $0.0179$ & $0.0078$ & $0.0273$ & $8/9$ \\
ExpeL               & $+14.9$ & $[3.6,26.1]$  & $0.0157$ & $0.0179$ & $0.0039$ & $0.0273$ & $9/9$ \\
AutoRefine          & $+13.2$ & $[8.7,17.6]$  & $0.0001$ & $0.0007$ & $0.0039$ & $0.0273$ & $9/9$ \\
Trajectory memory   & $+6.4$ & $[2.8,10.0]$  & $0.0036$ & $0.0145$ & $0.0039$ & $0.0273$ & $9/9$ \\
Abstraction memory  & $+10.9$ & $[7.6,14.2]$  & $<0.0001$ & $0.0004$ & $0.0039$ & $0.0273$ & $9/9$ \\
Joint memory        & $+10.6$ & $[3.6,17.5]$  & $0.0079$ & $0.0179$ & $0.0195$ & $0.0273$ & $8/9$ \\
\bottomrule
\end{tabular*}
\end{table}

Anti-persistence achieved higher success in all nine settings against five baselines and in eight settings against Reflexion and joint memory (\cref{tab:setting-level-stats}), with mean differences ranging from $+6.4$ to $+19.2$ percentage points. The direction of its advantage was therefore consistent across most or all settings for each baseline.

Although all seven Holm-adjusted $p$-values were below $0.05$ in each test family, the shared backbones and benchmarks limit the treatment of the nine settings as independent observations. Holm correction addresses multiple comparisons without resolving this dependence, so the nominal intervals and $p$-values remain exploratory summaries rather than evidence for population-level significance. They neither quantify seed-level uncertainty within settings nor justify extrapolation to untested models and benchmarks.

\section{Sensitivity and Robustness Studies}
\label{app:sensitivity}

We examine sensitivity to the two retrieval budgets introduced in \cref{app:method:hyperparams}: the trajectory budget $B$, which controls the records supplied for direct reuse or synthesis (\cref{app:sensitivity:B}), and the outcome-log budget $\hat{B}$, which controls the records used for mode selection (\cref{app:sensitivity:Bhat}).

\subsection{Trajectory Retrieval Budget $B$ Sensitivity}
\label{app:sensitivity:B}

The base trajectory retrieval budget $B$ limits the records retrieved under the mode-specific rules (\cref{app:method:similarity}): up to $B$ records for direct reuse under $\mathrm{none}$, up to $B$ successes for imitation and strategic synthesis, or up to $B$ failures and one optional success for corrective synthesis. To assess the effect of supplying fewer or more records as context, we evaluate $B \in \{1, 5, 10\}$ on Qwen3-Coder-Next with ALFWorld and WebShop, using three seeds per setting.

\textbf{Observations.}
\Cref{fig:sensitivity-B} shows that both tested alternatives to the default retrieval budget yielded lower success rates.

\noindent\textit{(i) The smallest retrieval budget underperformed.}
At $B = 1$, direct reuse receives at most one record; synthesis receives at most one success for imitation or strategic mode, or one failure and an optional success for corrective mode. The resulting success rates of $86.0\%$ on ALFWorld and $46.0\%$ on WebShop were below the default, consistent with greater sensitivity to individual examples when only a few records are available.

\noindent\textit{(ii) The largest retrieval budget also underperformed.}
At $B = 10$, success rates were $91.3\%$ on ALFWorld and $48.7\%$ on WebShop, again below the default. Adding more weakly related trajectories may dilute the useful context for direct reuse or synthesis, although this sweep does not isolate that explanation.

\noindent\textit{(iii) The default setting $B = 5$ performed best among the tested values.}
The default budget gave the highest observed success rates on both benchmarks, favoring an intermediate tested value as in the recency-decay analysis (\cref{fig:sensitivity-gamma}). This pattern is consistent with a trade-off between local relevance and coverage of retrieved experience.

\begin{figure}[t]
\centering
\captionsetup{skip=3pt}
\captionsetup[subfigure]{skip=3pt}
\begin{subfigure}{0.49\linewidth}
  \centering
  \includegraphics[width=\linewidth]{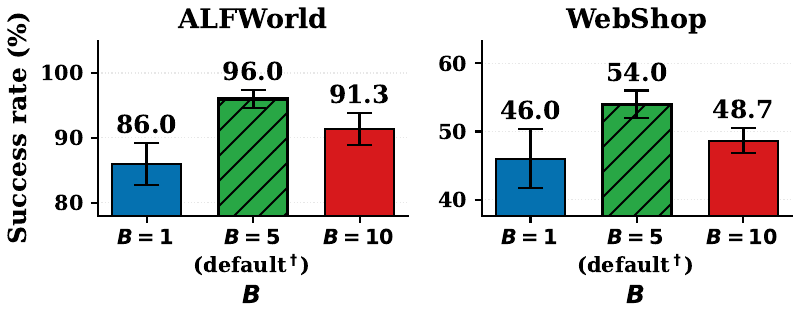}
  \caption{Trajectory retrieval budget $B$. The default $B = 5$ yields the highest success rate among the tested values.}
  \label{fig:sensitivity-B}
\end{subfigure}
\hfill
\begin{subfigure}{0.49\linewidth}
  \centering
  \includegraphics[width=\linewidth]{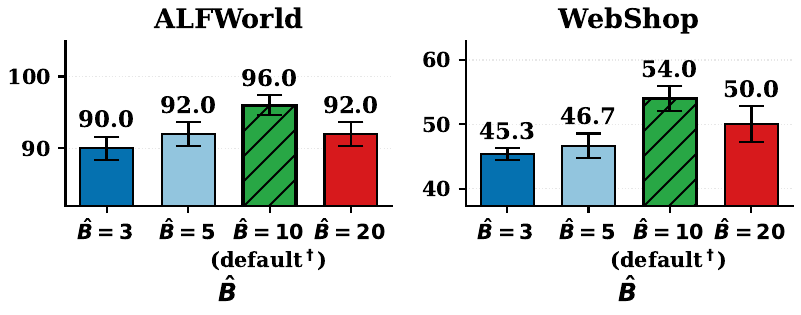}
  \caption{Outcome-log retrieval budget $\hat{B}$. The default $\hat{B} = 10$ yields the highest success rate among the tested values.}
  \label{fig:sensitivity-Bhat}
\end{subfigure}
\caption{Sensitivity to trajectory retrieval budget $B$ and outcome-log retrieval budget $\hat{B}$ on Qwen3-Coder-Next. Bars show means over $3$ seeds; error bars show standard deviations across seeds.}
\label{fig:sensitivity-budgets}
\end{figure}

\subsection{Outcome-Log Retrieval Budget $\hat{B}$ Sensitivity}
\label{app:sensitivity:Bhat}
The outcome-log retrieval budget $\hat{B}$ limits the size of the bandit's local neighborhood and thus the number of nearby observations available for mode selection (\cref{app:method:hyperparams}). We evaluate $\hat{B} \in \{3, 5, 20\}$ on Qwen3-Coder-Next with ALFWorld and WebShop, using three seeds per setting, and compare these alternatives with the default $\hat{B} = 10$ configuration in \cref{tab:head-to-head}.

\textbf{Observations.}
\Cref{fig:sensitivity-Bhat} shows the same qualitative pattern as the trajectory-budget sweep, with the default neighborhood size outperforming both smaller and larger tested values.

\noindent\textit{(i) Smaller neighborhoods yielded lower success rates.}
With only a few observations available for local reward estimation at $\hat{B}=3$, held-out success was $90.0\%$ on ALFWorld and $45.3\%$ on WebShop. Increasing the neighborhood to $\hat{B}=5$ improved these rates to $92.0\%$ and $46.7\%$, respectively, although both remained below the default, potentially reflecting greater sensitivity to individual outcomes in small neighborhoods.

\noindent\textit{(ii) The largest neighborhood also underperformed.}
At $\hat{B}=20$, success rates were $92.0\%$ on ALFWorld and $50.0\%$ on WebShop, again below the default. Including more weakly related tasks may reduce the relevance of local reward estimates, although the sweep does not directly test this explanation.

\noindent\textit{(iii) The default setting $\hat{B}=10$ performed best among the tested values.}
The default neighborhood size gave the highest observed success rates on both benchmarks, complementing the trajectory-budget sweep in \cref{app:sensitivity:B} and remaining consistent with a trade-off between local relevance and evidence coverage. These results support the default budgets within the tested ranges and settings without establishing a general optimum.

\section{Limitations}
\label{app:limitations}

Anti-persistence prevents direct reuse of stored abstractions, but temporary guidance can still influence later synthesis through trajectories. Its performance also depends on retrieval quality and experience coverage. Our main iterative experiments cover three text-based benchmarks with fixed backbones and a fixed improvement procedure; broader generalization and longer-term gains remain untested. The two controlled auxiliary analyses lack task-level and seed-level uncertainty estimates, while the setting-level statistics remain exploratory (\cref{app:additional-analyses}). Discarding abstractions also forgoes the benefits of reusing reliable knowledge across tasks. Future work could examine when external verification or uncertainty estimates justify selective persistence.
\end{document}